\newcommand{\E}{\mathbb{E}}
\newcommand{\R}{\mathbb{R}}
\newcommand{\bigO}{\mathcal{O}}
\definecolor{mydarkblue}{rgb}{0,0.08,0.45}
\newtheorem{theorem}{Theorem}
\newtheorem{lemma}[theorem]{Lemma}
\theoremstyle{definition}
\newcommand{\ra}[1]{\renewcommand{\arraystretch}{#1}}
\newcommand{\Lip}{\mathrm{Lip}}
\DeclareMathOperator{\tr}{tr}
\title{Residual Flows for Invertible Generative Modeling}
\author{%
  Ricky T. Q. Chen$^{1,3}$, Jens Behrmann$^2$, David Duvenaud$^{1,3}$, J\"orn-Henrik Jacobsen$^{1,3}$\\
  University of Toronto$^1$, University of Bremen$^2$, Vector Institute$^3$
  \vspace{1.5mm}\\
  \texttt{rtqichen@cs.toronto.edu},\;
  \texttt{jensb@uni-bremen.de}\\
  \texttt{duvenaud@cs.toronto.edu},\;
  \texttt{j.jacobsen@vectorinstitute.ai}
}
\begin{document}

\maketitle

\begin{abstract}
Flow-based generative models parameterize probability distributions through an invertible transformation and can be trained by maximum likelihood.
Invertible residual networks provide a flexible family of transformations where only Lipschitz conditions rather than strict architectural constraints are needed for enforcing invertibility.
However, prior work trained invertible residual networks for density estimation by relying on biased log-density estimates whose bias increased with the network's expressiveness.
We give a tractable unbiased estimate of the log density using a ``Russian roulette'' estimator, and reduce the memory required during training by using an alternative infinite series for the gradient.
Furthermore, we improve invertible residual blocks by proposing the use of activation functions that avoid derivative saturation and generalizing the Lipschitz condition to induced mixed norms.
The resulting approach, called Residual Flows, achieves state-of-the-art performance on density estimation amongst flow-based models, and outperforms networks that use coupling blocks at joint generative and discriminative modeling.
\end{abstract}

\section{Introduction}

\begin{wrapfigure}[20]{r}{0.4\linewidth}
	\vspace{-1em}
	\centering
	\begin{subfigure}{0.5\linewidth}
		\captionsetup{justification=centering}
		\centering
		\includegraphics[width=0.6\linewidth]{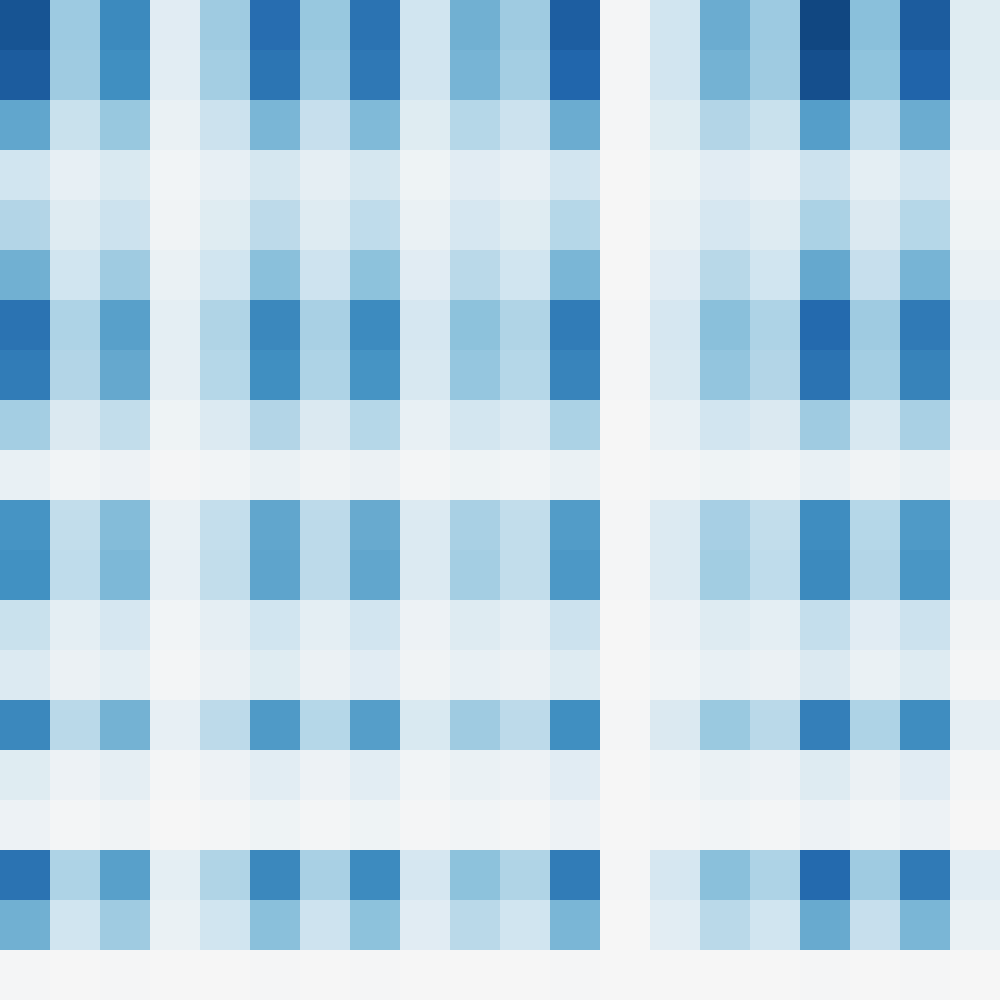}
		\caption{Det. Identities\\(Low Rank)}
	\end{subfigure}%
	\begin{subfigure}{0.5\linewidth}
		\captionsetup{justification=centering}
		\centering
		\includegraphics[width=0.6\linewidth]{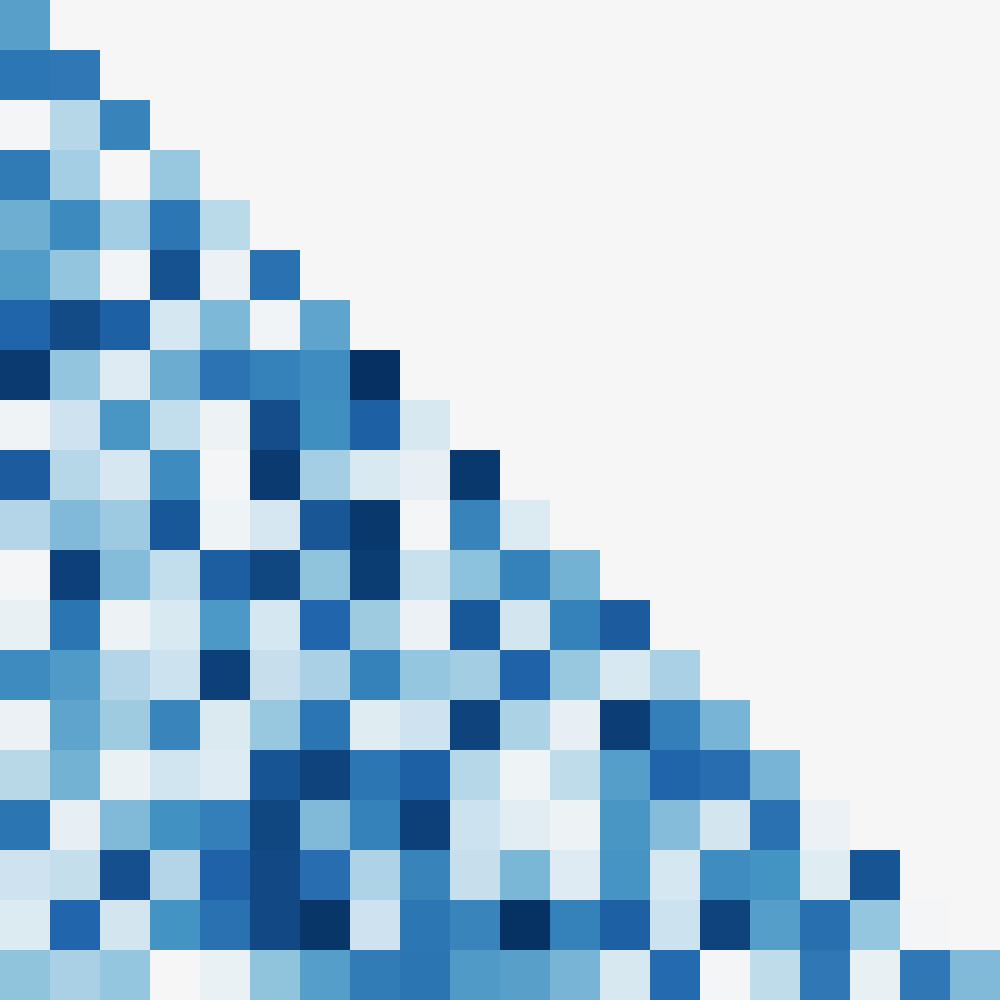}
		\caption{Autoregressive\\(Lower Triangular)}
	\end{subfigure}\\
	\vspace{1em}
	\begin{subfigure}{0.5\linewidth}
		\captionsetup{justification=centering}
		\centering
		\includegraphics[width=0.6\linewidth]{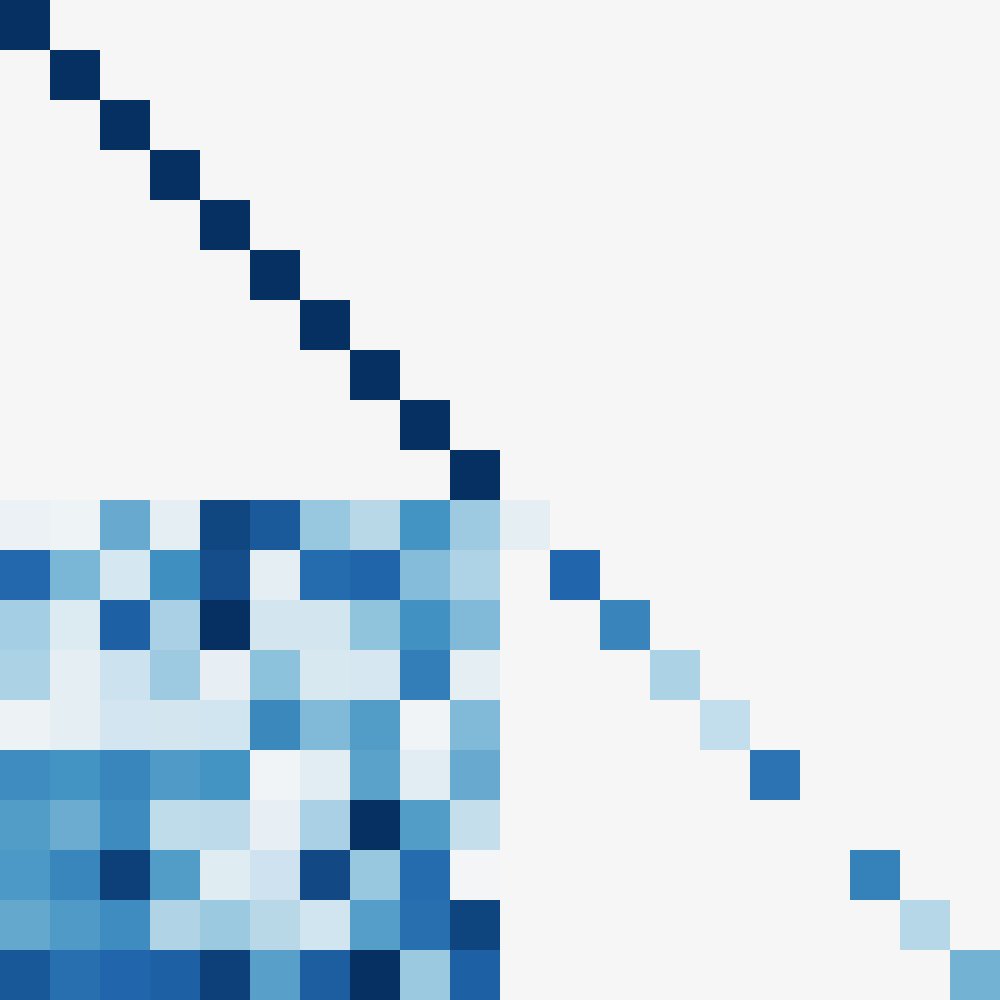}
		\caption{Coupling\\(Structured Sparsity)}
	\end{subfigure}%
	\begin{subfigure}{0.5\linewidth}
		\captionsetup{justification=centering}
		\centering
		\includegraphics[width=0.6\linewidth]{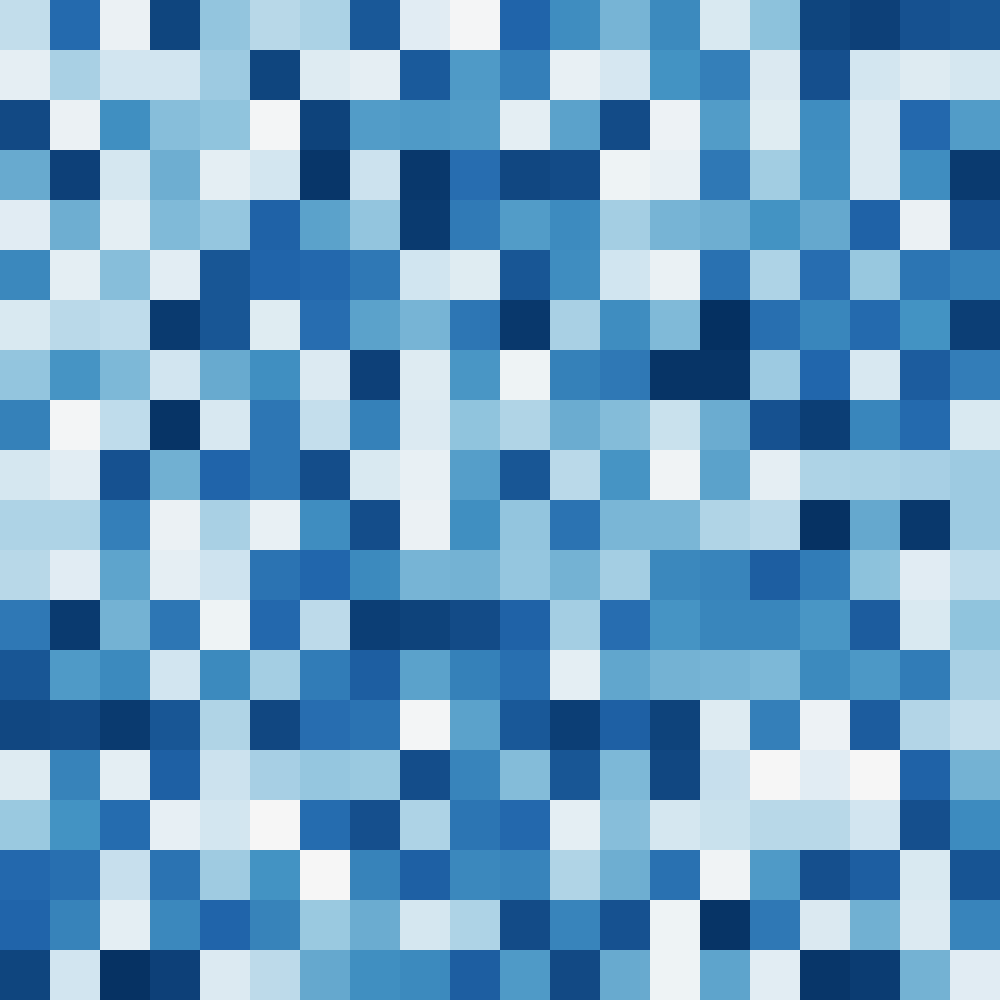}
		\caption{\textbf{Unbiased Est.}\\(Free-form)}
	\end{subfigure}%
	\caption{\textbf{Pathways to designing scalable normalizing flows} and their enforced Jacobian structure. Residual Flows fall under unbiased estimation with free-form Jacobian.}
	\label{fig:nf}
\end{wrapfigure}
Maximum likelihood is a core machine learning paradigm that poses learning as a distribution alignment problem. However, 
it is often unclear what family of distributions should be used to fit high-dimensional continuous data. In this regard, the change of variables theorem offers an appealing way to construct flexible distributions that allow tractable exact sampling and efficient evaluation of its density. This class of models is generally referred to as invertible or flow-based generative models~\citep{deco1995nonlinear,rezende2015variational}.

With invertibility as its core design principle, flow-based models (also referred to as normalizing flows) have shown to be capable of generating realistic images~\citep{kingma2018glow} and can achieve density estimation performance on-par with competing state-of-the-art approaches~\citep{ho2019flowpp}. In applications, they have been applied to study adversarial robustness~\citep{jacobsen2018excessive} and are used to train hybrid models with both generative and classification capabilities~\citep{nalisnick2019hybrid} using a weighted maximum likelihood objective.

Existing flow-based models~\citep{rezende2015variational,kingma2016improved,dinh2014nice,chen2018neural} make use of restricted transformations with sparse or structured Jacobians (Figure~\ref{fig:nf}). These allow efficient computation of the log probability under the model but at the cost of architectural engineering.
Transformations that scale to high-dimensional data rely on specialized architectures such as coupling blocks~\citep{dinh2014nice,dinh2016density} or solving an ordinary differential equation~\citep{grathwohl2019ffjord}. Such approaches have a strong inductive bias that can hinder their application in other tasks, such as learning representations that are suitable for both generative and discriminative tasks.

Recent work by \citet{behrmann2019} showed that residual networks~\citep{he2016deep} can be made invertible by simply enforcing a Lipschitz constraint, allowing to use a very successful discriminative deep network architecture for unsupervised flow-based modeling. Unfortunately, the density evaluation requires computing an infinite series. The choice of a fixed truncation estimator used by \citet{behrmann2019} leads to substantial bias that is tightly coupled with the expressiveness of the network, and cannot be said to be performing maximum likelihood as bias is introduced in the objective and gradients.

In this work, we introduce Residual Flows, a flow-based generative model that produces an unbiased estimate of the log density and has memory-efficient backpropagation through the log density computation. This allows us to use expressive architectures and train via maximum likelihood. Furthermore, we propose and experiment with the use of activations functions that avoid derivative saturation and induced mixed norms for Lipschitz-constrained neural networks.

\section{Background}

\paragraph{Maximum likelihood estimation.} To perform maximum likelihood with stochastic gradient descent, it is sufficient to have an unbiased estimator for the gradient as
\begin{equation}\label{eq:ml}
    \nabla_\theta D_\textnormal{KL}(p_{\textnormal{data}} \;||\; p_\theta) = \nabla_\theta \E_{x \sim p_{\textnormal{data}}(x)} \left[ \log p_\theta(x) \right] =
    \E_{x \sim p_{\textnormal{data}}(x)} \left[ \nabla_\theta \log p_\theta(x) \right],
\end{equation}
where $p_\textnormal{data}$ is the unknown data distribution which can be sampled from and $p_\theta$ is the model distribution. An unbiased estimator of the gradient also immediately follows from an unbiased estimator of the log density function, $\log p_\theta(x)$.

\paragraph{Change of variables theorem.} With an invertible transformation $f$, the change of variables
\begin{equation}\label{eq:cov}
    \log p(x) = \log p(f(x)) + \log \left| \det \frac{df(x)}{dx} \right|
\end{equation}
captures the change in density of the transformed samples. A simple base distribution such as a standard normal is often used for $\log p(f(x))$. Tractable evaluation of \eqref{eq:cov} allows flow-based models to be trained using the maximum likelihood objective~\eqref{eq:ml}. In contrast, variational autoencoders~\citep{kingma2013auto} can only optimize a stochastic lower bound, and generative adversial networks~\citep{goodfellow2014generative} require an extra discriminator network for training.

\paragraph{Invertible residual networks (i-ResNets).} Residual networks are composed of simple transformations $y = f(x) = x + g(x)$. \citet{behrmann2019} noted that this transformation is invertible by the Banach fixed point theorem if $g$ is contractive, i.e. with Lipschitz constant strictly less than unity, which was enforced using spectral normalization~\citep{miyato2018spectral,gouk2018regularisation}.

Applying i-ResNets to the change-of-variables \eqref{eq:cov}, the identity 
\begin{equation}\label{eq:cov_res}
    \log p(x) = \log p(f(x)) + \tr\left( \sum_{k=1}^\infty  \frac{(-1)^{k+1}}{k} [J_g(x)]^k \right)
\end{equation}
was shown, where $J_g(x) = \frac{dg(x)}{dx}$. Furthermore, the Skilling-Hutchinson estimator~\citep{skilling1989eigenvalues,hutchinson1990stochastic} was used to estimate the trace in the power series. \citet{behrmann2019} used a fixed truncation to approximate the infinite series in \eqref{eq:cov_res}. 
However, this na\"ive approach has a bias that grows with the number of dimensions of $x$ and the Lipschitz constant of $g$, as both affect the convergence rate of this power series.
As such, the fixed truncation estimator requires a careful balance between bias and expressiveness, and cannot scale to higher dimensional data.
Without decoupling the objective and estimation bias, i-ResNets end up optimizing for the bias without improving the actual maximum likelihood objective~(see Figure~\ref{fig:bias}).


\section{Residual Flows}

\subsection{Unbiased Log Density Estimation for Maximum Likelihood Estimation}

Evaluation of the exact log density function $\log p_\theta(\cdot)$ in \eqref{eq:cov_res} requires infinite time due to the power series. Instead, we rely on randomization to derive an unbiased estimator that can be computed in finite time (with probability one) based on an existing concept~\citep{kahn1955use}.

To illustrate the idea, let $\Delta_k$ denote the $k$-th term of an infinite series, and suppose we always evaluate the first term then flip a coin $b \sim \textnormal{Bernoulli}(q)$ to determine whether we stop or continue evaluating the remaining terms. By reweighting the remaining terms by $\frac{1}{1-q}$, we obtain an unbiased estimator
\begin{equation}
    \Delta_1 + \E \left[ \left(\frac{\sum_{k=2}^\infty \Delta_k}{1-q} \right) \mathbbm{1}_{b=0} + (0)\mathbbm{1}_{b=1}\right] = \Delta_1 + \frac{\sum_{k=2}^\infty \Delta_k}{1-q} (1-q) = \sum_{k=1}^\infty \Delta_k.
\end{equation}
Interestingly, whereas na\"ive computation would always use infinite compute, this unbiased estimator has probability $q$ of being evaluated in finite time. We can obtain an estimator that is evaluated in finite time with probability one by applying this process infinitely many times to the remaining terms. Directly sampling the number of evaluated terms, we obtain the appropriately named ``Russian roulette'' estimator~\citep{kahn1955use}
\begin{equation}\label{eq:rr_estimator}
    \sum_{k=1}^\infty \Delta_k = \E_{n\sim p(N)} \left[ \sum_{k=1}^n \frac{\Delta_k}{\mathbb{P}(N \geq k)} \right].
\end{equation}
We note that the explanation above is only meant to be an intuitive guide and not a formal derivation. The peculiarities of dealing with infinite quantities dictate that we must make assumptions on $\Delta_k$, $p(N)$, or both in order for the equality in \eqref{eq:rr_estimator} to hold. While many existing works have made different assumptions depending on specific applications of \eqref{eq:rr_estimator}, we state our result as a theorem where the only condition is that $p(N)$ must have support over all of the indices.
\begin{theorem}[Unbiased log density estimator]
\label{thm:unbiasedEst}
Let $f(x) = x + g(x)$ with $\Lip(g) < 1$ and $N$ be a random variable with support over the positive integers. Then
\begin{equation}
\label{eq:unbiased_logdet}
\begin{split}
    \log p(x) = \log p(f(x)) 
    + \E_{n, v} \left[ \sum_{k=1}^n \frac{(-1)^{k+1}}{k} \frac{v^T[J_g(x)^k]v}{\mathbb{P}(N \geq k)}\right],
\end{split}
\end{equation}
where $n\sim p(N)$ and $v \sim \mathcal{N}(0,I)$. 
\end{theorem}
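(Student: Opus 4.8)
The plan is to combine the two stochastic estimators introduced in the excerpt---the Skilling--Hutchinson trace estimator and the Russian roulette estimator of \eqref{eq:rr_estimator}---with the deterministic power-series identity \eqref{eq:cov_res}, and then to discharge the interchange-of-limits obligations that make these formal manipulations rigorous. Starting from \eqref{eq:cov_res}, the only quantity that needs rewriting is the trace of the infinite matrix series $S := \sum_{k=1}^\infty \frac{(-1)^{k+1}}{k}[J_g(x)]^k$. First I would establish that this series converges absolutely. Since $\Lip(g) < 1$ bounds the operator norm of the Jacobian everywhere, $\|J_g(x)\| \leq \Lip(g) < 1$, and submultiplicativity gives $\|[J_g(x)]^k\| \leq \Lip(g)^k$. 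Hence $\sum_{k=1}^\infty \frac{1}{k}\|[J_g(x)]^k\| \leq \sum_{k=1}^\infty \frac{\Lip(g)^k}{k} = -\log(1 - \Lip(g)) < \infty$, so $S$ is a well-defined matrix with finite trace.

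Next I would introduce $v \sim \mathcal{N}(0,I)$. Using the Skilling--Hutchinson identity $\E_v[v^T A v] = \tr(A)$, I would write $\tr(S) = \E_v[v^T S v]$ and then move the expectation inside the sum, $\E_v[v^T S v] = \sum_{k=1}^\infty \frac{(-1)^{k+1}}{k}\E_v[v^T [J_g(x)]^k v]$. This interchange is justified by dominated convergence, with summable dominating bound $\frac{\Lip(g)^k}{k}\,\E_v[\|v\|^2] = \frac{\Lip(g)^k}{k}\,d$, where $d$ is the dimension and $|v^T A v| \leq \|A\|\,\|v\|^2$ is used termwise. Then I would apply the Russian roulette identity \eqref{eq:rr_estimator} to the scalar series with $\Delta_k = \frac{(-1)^{k+1}}{k}\, v^T [J_g(x)]^k v$ (for fixed $v$), rewriting $\sum_{k=1}^\infty \Delta_k = \E_{n\sim p(N)}[\sum_{k=1}^n \frac{\Delta_k}{\mathbb{P}(N \geq k)}]$. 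The support condition on $p(N)$ guarantees $\mathbb{P}(N \geq k) > 0$ for every $k$, so every term is well-defined; unrolling the expectation, swapping the order of the double sum over $n$ and $k$, and using $\sum_{n \geq k} p(N = n) = \mathbb{P}(N \geq k)$ recovers $\sum_k \Delta_k$. Finally I would fold the two expectations into the single joint expectation $\E_{n,v}$ over the product measure to reach \eqref{eq:unbiased_logdet}.

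The main obstacle is the rigorous justification of swapping the order of summation in the Russian roulette step (and the nested exchange of $\E_v$, $\E_n$, and $\sum_k$), which is exactly the subtlety the authors flag when they caution that the informal derivation ``is only meant to be an intuitive guide.'' Fubini's theorem applies only if the relevant double sum converges absolutely; concretely one needs $\sum_{n=1}^\infty p(N=n) \sum_{k=1}^n \frac{|\Delta_k|}{\mathbb{P}(N \geq k)} = \sum_{k=1}^\infty |\Delta_k| < \infty$, which collapses precisely to the absolute-convergence bound from the first step. Thus the single Lipschitz hypothesis $\Lip(g) < 1$ is what makes all three interchanges simultaneously legitimate, and I expect establishing this uniform geometric bound---rather than any individual algebraic manipulation---to be the crux of the argument.
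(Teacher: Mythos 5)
Your proposal is correct and follows essentially the same route as the paper's own proof: both arguments hinge on the geometric bound $\|J_g(x)^k\|_2 \leq \Lip(g)^k < 1$ to get absolute convergence (pointwise in $v$, and in expectation since $\E_v\|v\|_2^2 = d$), and this single integrability fact is what licenses the Skilling--Hutchinson swap, the Russian-roulette unbiasedness, and the merging into the joint expectation $\E_{n,v}$. The only difference is organizational: the paper factors the Russian-roulette step into a standalone lemma (using the sufficient condition $\E[\sum_k |\Delta_k|] < \infty$ from Bouchard-C\^ot\'e) proved via monotone and dominated convergence of randomly truncated partial sums, whereas you verify the same interchange directly by Fubini on the double sum over $(n,k)$, using $\sum_{n} p(N=n) \sum_{k \leq n} |\Delta_k| / \mathbb{P}(N \geq k) = \sum_k |\Delta_k|$ --- an equivalent justification of the same fact.
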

Here we have used the Skilling-Hutchinson trace estimator~\citep{skilling1989eigenvalues,hutchinson1990stochastic} to estimate the trace of the matrices $J_g^k$. A detailed proof is given in Appendix \ref{sec:proofs}.

\begin{wrapfigure}[17]{r}{0.5\linewidth}
	\vspace{-0.3em}
	\centering
	\includegraphics[width=\linewidth, trim=18px 0 36px 25px, clip]{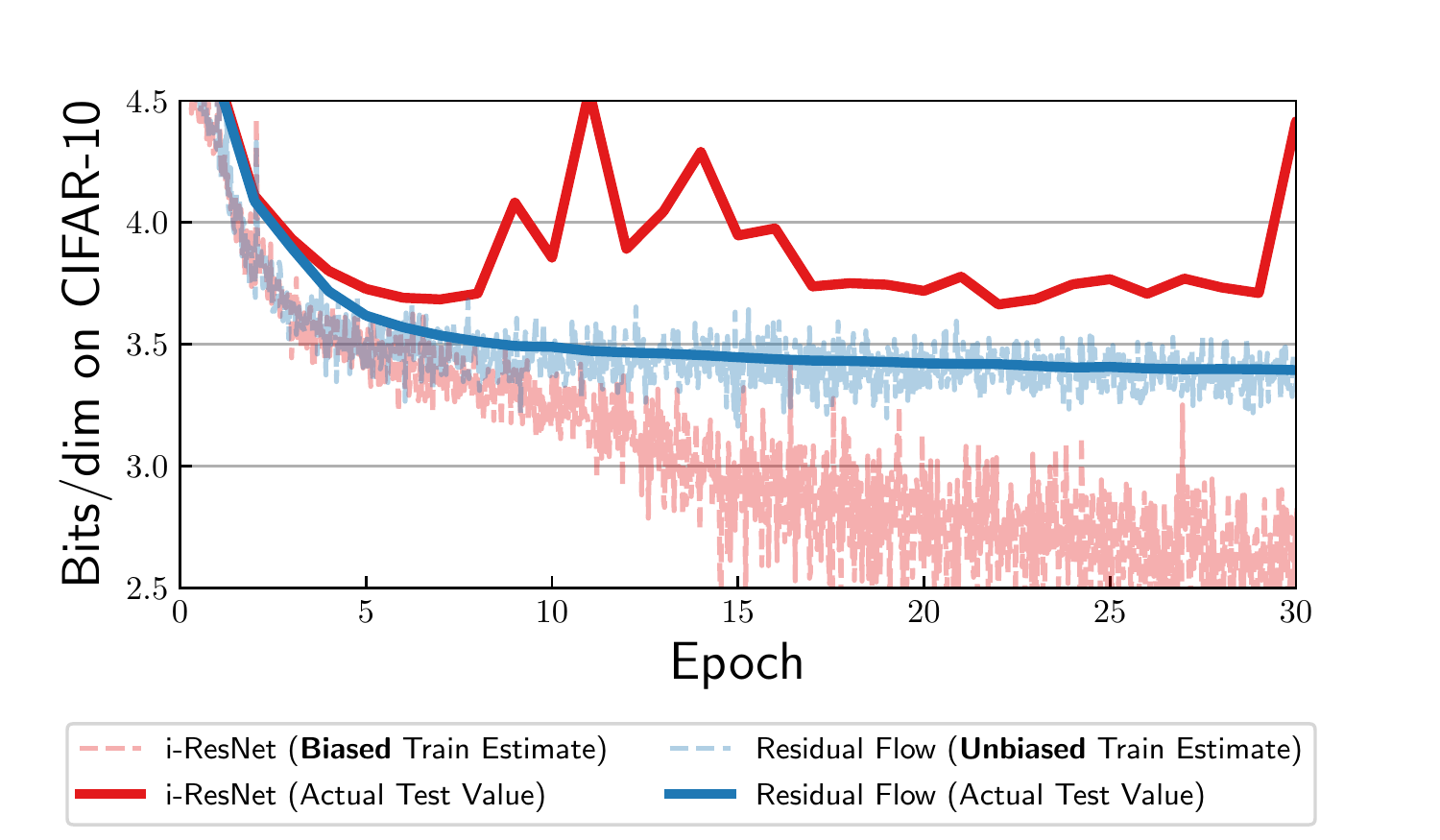}
	\caption{i-ResNets suffer from substantial bias when using expressive networks, whereas Residual Flows principledly perform maximum likelihood with unbiased stochastic gradients.}
	\label{fig:bias}
\end{wrapfigure}
Note that since $J_g$ is constrained to have a spectral radius less than unity, the power series converges exponentially. The variance of the Russian roulette estimator is small when the infinite series exhibits fast convergence~\citep{rhee2015unbiased,beatson2019efficient}, and in practice, we did not have to tune $p(N)$ for variance reduction. Instead, in our experiments, we compute two terms exactly and then use the unbiased estimator on the remaining terms with a single sample from $p(N) = \textnormal{Geom}(0.5)$. This results in an expected compute cost of $4$ terms, which is less than the $5$ to $10$ terms that \citet{behrmann2019} used for their biased estimator.

Theorem~\ref{thm:unbiasedEst} forms the core of Residual Flows, as we can now perform maximum likelihood training by backpropagating through \eqref{eq:unbiased_logdet} to obtain unbiased gradients. This allows us to train more expressive networks where a biased estimator would fail (Figure~\ref{fig:bias}). The price we pay for the unbiased estimator is variable compute and memory, as each sample of the log density uses a random number of terms in the power series.

\subsection{Memory-Efficient Backpropagation}\label{sec:gradient}

Memory can be a scarce resource, and running out of memory due to a large sample from the unbiased estimator can halt training unexpectedly. To this end, we propose two methods to reduce the memory consumption during training. 

To see how na\"ive backpropagation can be problematic, the gradient w.r.t.\ parameters $\theta$ by directly differentiating through the power series~\eqref{eq:unbiased_logdet} can be expressed as
\begin{equation}
\frac{\partial}{\partial \theta} \log\det \big(I + J_g(x,\theta)\big)
= \E_{n,v} \left[ \sum_{k=1}^n   \frac{(-1)^{k+1}}{k} \frac{\partial v^T (J_g(x,\theta)^k)v}{\partial \theta} \right].
\end{equation}
Unfortunately, this estimator requires each term to be stored in memory because $\nicefrac{\partial}{\partial \theta}$ needs to be applied to each term. The total memory cost is then $\bigO(n\cdot m)$ where $n$ is the number of computed terms and $m$ is the number of residual blocks in the entire network. This is extremely memory-hungry during training, and a large random sample of $n$ can occasionally result in running out of memory.

\paragraph{Neumann gradient series.} Instead, we can specifically express the gradients as a power series derived from a Neumann series (see Appendix \ref{app:gradient}). Applying the Russian roulette and trace estimators, we obtain the following theorem.

\begin{theorem}[Unbiased log-determinant gradient estimator]
\label{thm:unbiasedGrad}
Let $\Lip(g) < 1$ and $N$ be a random variable with support over positive integers. Then
\begin{align}\label{eq:neumannSeries}
\frac{\partial}{\partial \theta} \log\det \big(I + J_g(x,\theta)\big) 
= \E_{n, v}  \left[\left( \sum_{k=0}^n \frac{(-1)^k}{\mathbb{P}(N \geq k)} \; v^T J(x, \theta)^k \right) \frac{\partial (J_g(x,\theta))}{\partial \theta} v\right] ,
\end{align}
where $n\sim p(N)$ and $v \sim \mathcal{N}(0,I)$. 
\end{theorem}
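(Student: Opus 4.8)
The plan is to reduce the statement to the Russian roulette and Skilling--Hutchinson machinery already used for Theorem~\ref{thm:unbiasedEst}, after first rewriting the gradient as a single convergent power series. First I would invoke Jacobi's formula (the derivative-of-log-determinant identity) to write
$$\frac{\partial}{\partial \theta} \log\det(I + J_g(x,\theta)) = \tr\!\left( (I + J_g)^{-1} \frac{\partial J_g}{\partial \theta} \right).$$
The hypothesis $\Lip(g) < 1$ is exactly what controls the inverse: the operator norm of the Jacobian is bounded by the Lipschitz constant, so $\|J_g\|_2 \le \Lip(g) < 1$ pointwise, the spectral radius is strictly below one, and the Neumann series $(I + J_g)^{-1} = \sum_{k=0}^\infty (-1)^k J_g^k$ converges absolutely. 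Substituting this series and using linearity and continuity of the trace gives the key power-series representation
$$\frac{\partial}{\partial \theta} \log\det(I + J_g) = \sum_{k=0}^\infty (-1)^k \tr\!\left( J_g^k \frac{\partial J_g}{\partial \theta} \right).$$

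Then I want to apply two estimators in sequence. I would first apply the Russian roulette estimator \eqref{eq:rr_estimator}, identifying the $k$-th summand $\Delta_k = (-1)^k \tr(J_g^k \, \partial_\theta J_g)$ and using that $N$ has full support on the positive integers, so every index carries positive probability and $\mathbb{P}(N \ge 0) = 1$ leaves the $k=0$ term unweighted. This replaces the infinite sum with $\E_{n}[\sum_{k=0}^n \Delta_k / \mathbb{P}(N \ge k)]$. I would then replace each trace by its Skilling--Hutchinson form $\tr(A) = \E_{v\sim\mathcal{N}(0,I)}[v^T A v]$, so that $\tr(J_g^k \, \partial_\theta J_g) = \E_v[v^T J_g^k (\partial_\theta J_g) v]$, and finally fold the two expectations into a single $\E_{n,v}$ to recover \eqref{eq:neumannSeries}.

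The main obstacle is justifying the two interchanges of expectation and (infinite) summation that underlie the Russian roulette step and the merged $\E_{n,v}$; as stressed in the text around \eqref{eq:rr_estimator}, the bare equality need not hold without an integrability hypothesis. This is where the Lipschitz bound does real work a second time: because $\|J_g^k\|_2 \le \Lip(g)^k$ decays geometrically and $\partial_\theta J_g$ is bounded under the standing smoothness assumptions on $g$, the terms $\Delta_k$ are dominated by a geometric sequence, giving absolute convergence and an $L^1$ bound uniform enough to apply Fubini--Tonelli and dominated convergence and swap the order of $\E_n$, $\E_v$, and the sum. I would also record the minor consistency check that this series is exactly what one obtains by differentiating the log-density series of \eqref{eq:cov_res} term by term, since $\partial_\theta \tr(J_g^k) = k\,\tr(J_g^{k-1}\partial_\theta J_g)$ cancels the $\nicefrac{1}{k}$ coefficient and reindexes to the same sum, confirming that the memory-friendly Neumann form agrees with the naive one.
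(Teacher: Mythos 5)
Your proposal is correct and takes essentially the same route as the paper: the paper likewise derives the series via Jacobi's formula and the Neumann expansion (its Appendix~\ref{app:gradient}), then verifies integrability through the geometric bound $2\|v\|_2\,\big\|\partial J_g/\partial\theta\big\|\sum_{k}\Lip(g)^k<\infty$, applies Fubini, and invokes its randomized-truncation Lemma~\ref{lem:conditionConvergence}, which requires only that $p(N)$ have full support. One caution on your closing consistency check: as the paper itself notes, the term-by-term-differentiated series and the Neumann-form series agree only when the traces are computed exactly, not sample-wise under the Hutchinson estimator.
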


As the power series in \eqref{eq:neumannSeries} does not need to be differentiated through, using this reduces the memory requirement by a factor of $n$. This is especially useful when using the unbiased estimator as the memory will be constant regardless of the number of terms we draw from $p(N)$.

\paragraph{Backward-in-forward: early computation of gradients.} We can further reduce memory by partially performing backpropagation during the forward evaluation. By taking advantage of $\log\det (I + J_g(x,\theta))$ being a scalar quantity, the partial derivative from the objective $\mathcal{L}$ is
\begin{align}
    \frac{\partial \mathcal{L}}{\partial \theta} = \underbrace{\frac{\partial \mathcal{L}}{\partial \log\det (I + J_g(x,\theta))}}_{\textnormal{scalar}} \underbrace{\frac{\partial \log\det (I + J_g(x,\theta))}{\partial \theta \vphantom{J_g}}}_{\textnormal{vector}}.
\end{align}

\begin{wrapfigure}[15]{r}{0.5\linewidth}
	\vspace{-0.8em}
	\centering
	\includegraphics[width=\linewidth,trim=8px 0 10px 10px, clip]{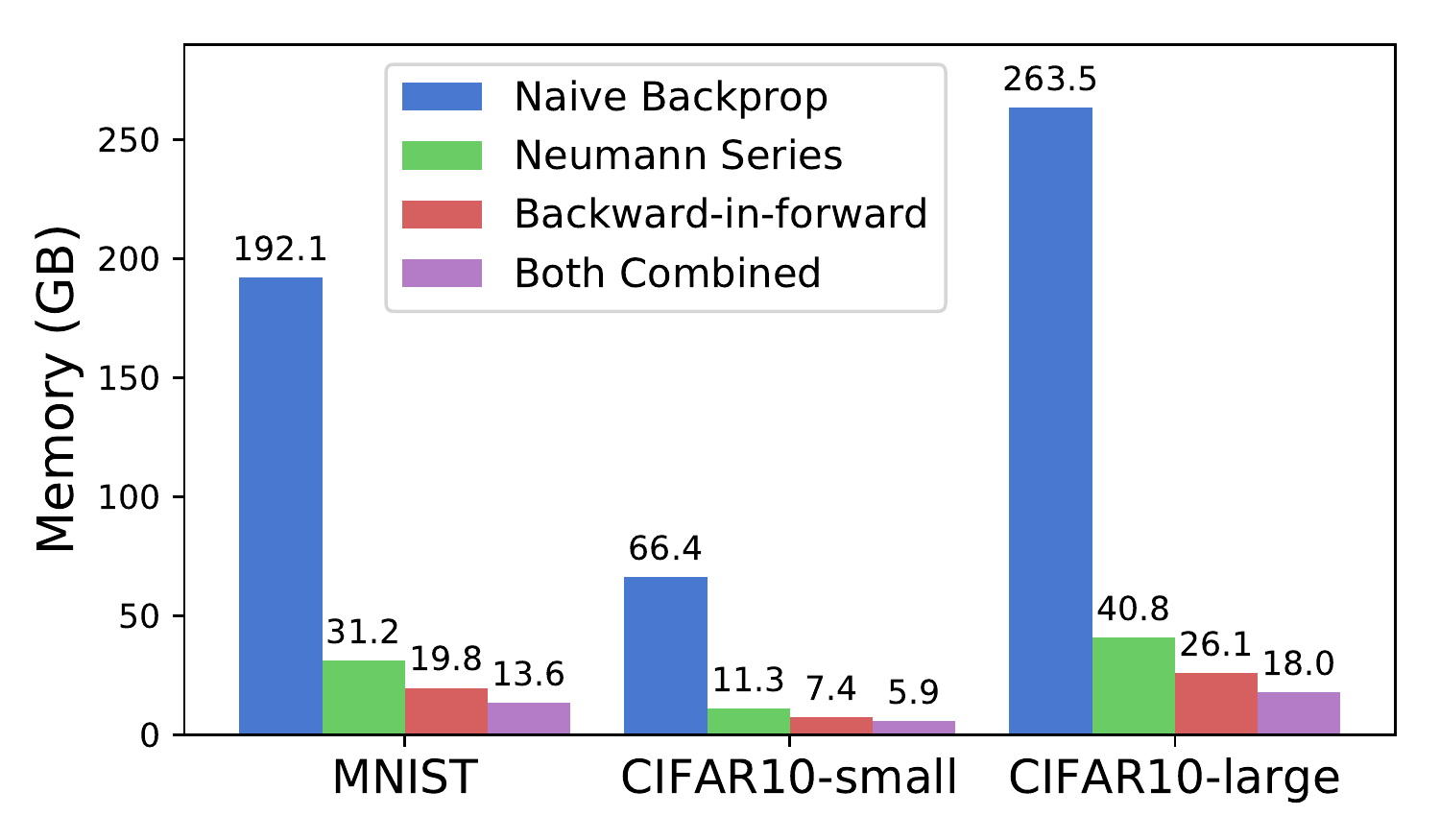}
	\caption{Memory usage (GB) per minibatch of 64 samples when computing $n$=10 terms in the corresponding power series. \textit{CIFAR10-small} uses immediate downsampling before any residual blocks.}
	\label{fig:memory}
\end{wrapfigure}
For every residual block, we compute $\nicefrac{\partial \log\det (I + J_g(x,\theta))}{\partial \theta}$ along with the forward pass, release the memory for the computation graph, then simply multiply by $\nicefrac{\partial \mathcal{L}}{\partial \log\det (I + J_g(x,\theta))}$ later during the main backprop. This reduces memory by another factor of $m$ to $\bigO(1)$ with negligible overhead. 

Note that while these two tricks remove the memory cost from backpropagating through the $\log \det$ terms, computing the path-wise derivatives from $\log p (f(x))$ still requires the same amount of memory as a single evaluation of the residual network. Figure~\ref{fig:memory} shows that the memory consumption can be enormous for na\"ive backpropagation, and using large networks would have been intractable. 

\subsection{Avoiding Derivative Saturation with the LipSwish Activation Function}

As the log density depends on the first derivatives through the Jacobian $J_g$, the gradients for training depend on second derivatives. Similar to the phenomenon of saturated activation functions, Lipschitz-constrained activation functions can have a derivative saturation problem. For instance, the ELU activation used by \citet{behrmann2019} achieves the highest Lipschitz constant when $\textnormal{ELU}'(z) = 1$, but this occurs when the second derivative is exactly zero in a very large region, implying there is a trade-off between a large Lipschitz constant and non-vanishing gradients.

We thus desire two properties from our activation functions $\phi(z)$:
\begin{enumerate}
    \item The first derivatives must be bounded as $|\phi'(z)| \leq 1$ for all $z$
    \item The second derivatives should not asymptotically vanish when $|\phi'(z)|$ is close to one.
\end{enumerate}
While many activation functions satisfy condition 1, most do not satisfy condition 2. We argue that the ELU and softplus activations are suboptimal due to derivative saturation. Figure \ref{fig:actfns} shows that when softplus and ELU saturate at regions of unit Lipschitz, the second derivative goes to zero, which can lead to vanishing gradients during training.

We find that good activation functions satisfying condition 2 are \emph{smooth and non-monotonic} functions, such as Swish~\citep{ramachandran2017searching}. However, Swish by default does not satisfy condition 1 as $\max_z |\frac{d}{dz} \textnormal{Swish}(z)| \apprle 1.1$. But scaling via
\begin{equation}
    \textnormal{LipSwish}(z) := \textnormal{Swish}(z)/1.1 = z \cdot \sigma(\beta z) / 1.1,
\end{equation}
where $\sigma$ is the sigmoid function, results in $\max_z |\frac{d}{dz} \textnormal{LipSwish}(z)| \leq 1$ for all values of $\beta$. LipSwish is a simple modification to Swish that exhibits a less than unity Lipschitz property. In our experiments, we parameterize $\beta$ to be strictly positive by passing it through softplus. Figure \ref{fig:actfns} shows that in the region of maximal Lipschitz, LipSwish does not saturate due to its non-monotonicity property.

\begin{figure}
\centering
\begin{subfigure}[b]{0.3\linewidth}
	\centering
	\includegraphics[width=\linewidth]{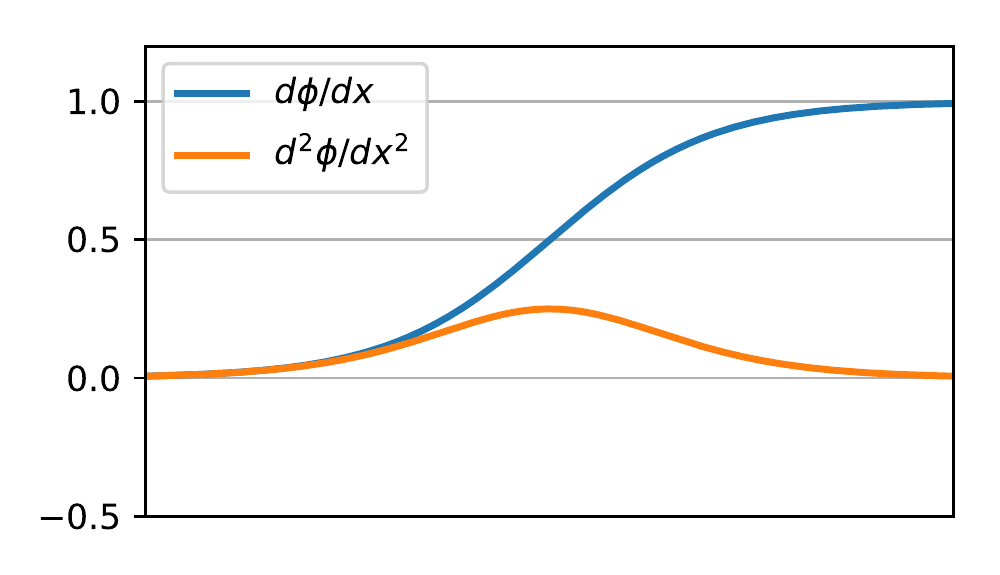}
	\vspace{-1.8em}
	\caption*{Softplus}
\end{subfigure}
\begin{subfigure}[b]{0.3\linewidth}
	\centering
	\includegraphics[width=\linewidth]{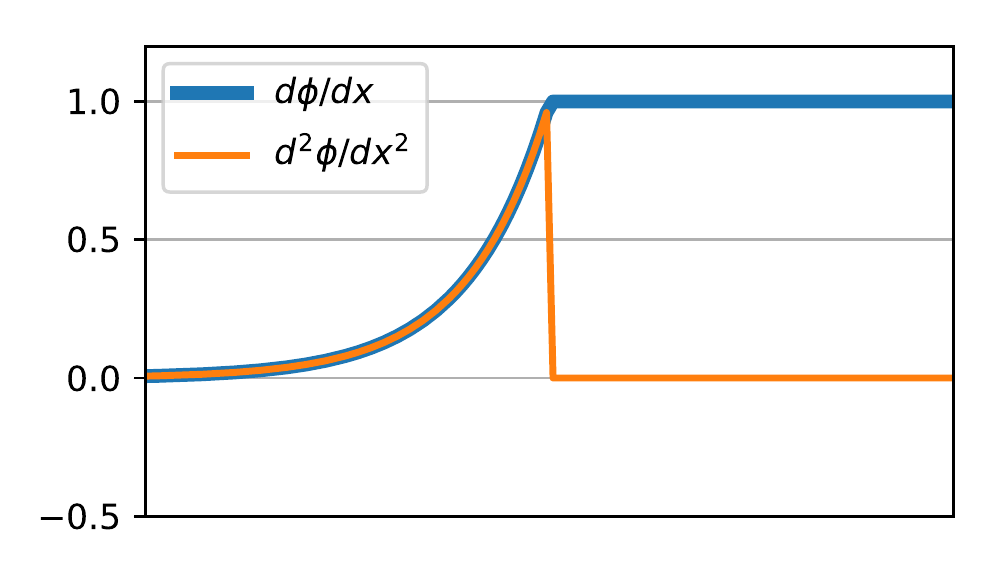}
	\vspace{-1.8em}
	\caption*{ELU}
\end{subfigure}
\begin{subfigure}[b]{0.3\linewidth}
	\centering
	\includegraphics[width=\linewidth]{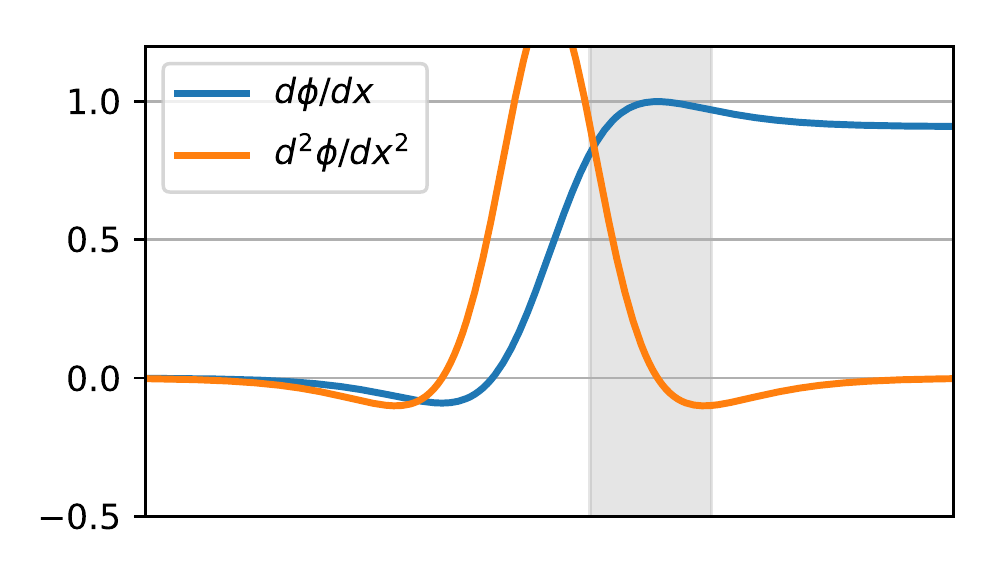}
	\vspace{-1.8em}
	\caption*{LipSwish}
\end{subfigure}
\caption{Common smooth Lipschitz activation functions $\phi$ usually have vanishing $\phi''$ when $\phi'$ is maximal. LipSwish has a non-vanishing $\phi''$ in the region where $\phi'$ is close to one.}
\label{fig:actfns}
\end{figure}

\section{Related Work}

\paragraph{Estimation of Infinite Series.} Our derivation of the unbiased estimator follows from the general approach of using a randomized truncation~\citep{kahn1955use}. This paradigm of estimation has been repeatedly rediscovered and applied in many fields, including solving of stochastic differential equations~\citep{mcleish2010general,rhee2012new,rhee2015unbiased}, ray tracing for rendering paths of light~\citep{arvo1990particle}, and estimating limiting behavior of optimization problems~\citep{tallec2017unbiasing,beatson2019efficient}, among many other applications.
Some recent works use Chebyshev polynomials to estimate the spectral functions of symmetric matrices~\citep{han2018chebyshev,adams2018estimating,aditya2018backprop}. These works estimate quantities that are similar to those presented in this work, but a key difference is that the Jacobian in our power series is not symmetric. Works that proposed the random truncation approach typically made assumptions on $p(N)$ in order for it to be applicable to general infinite series~\citep{mcleish2010general,rhee2015unbiased,han2018chebyshev}. Fortunately, since the power series in Theorems \ref{thm:unbiasedEst} and \ref{thm:unbiasedGrad} converge fast enough, we were able to make use of a different set of assumptions requiring only that $p(N)$ has sufficient support~(details in Appendix~\ref{sec:proofs}).

\paragraph{Memory-efficient Backpropagation.} The issue of computing gradients in a memory-efficient manner was explored by \citet{gomez2017reversible} and \citet{chang2018reversible} for residual networks with a coupling-based architecture devised by~\citet{dinh2014nice}, and explored by \citet{chen2018neural} for a continuous analogue of residual networks. These works focus on the path-wise gradients from the output of the network, whereas we focus on the gradients from the log-determinant term in the change of variables equation specifically for generative modeling. On the other hand, our approach shares some similarities with Recurrent Backpropagation \citep{Almeida87, Pineda87, liao2018reviving}, since both approaches leverage convergent dynamics to modify the derivatives.

\paragraph{Invertible Deep Networks.} Flow-based generative models are a density estimation approach which has invertibility as its core design principle \citep{rezende2015variational,deco1995nonlinear}. Most recent work on flows focuses on designing maximally expressive architectures while maintaining invertibility and tractable log determinant computation \citep{dinh2014nice,dinh2016density,kingma2018glow}. 
An alternative route has been taken by Continuous Normalizing Flows~\citep{chen2018neural} which make use of Jacobian traces instead of Jacobian determinants, provided that the transformation is parameterized by an ordinary differential equation. Invertible architectures are also of interest for discriminative problems, as their information-preservation properties make them suitable candidates for analyzing and regularizing learned representations \citep{jacobsen2018excessive}.

\section{Experiments}

\begin{table}
	\centering
	\ra{1.3}
	\setlength{\tabcolsep}{5pt}
	\caption{Results [bits/dim] on standard benchmark datasets for density estimation. In brackets are models that used ``variational dequantization''~\citep{ho2019flowpp}, which we don't compare against.}
	\label{tab:density}
	\resizebox{\textwidth}{!}{%
		\begin{tabular}{@{}llllll@{}}\toprule
			Model & \;MNIST\; & \;CIFAR-10\; & ImageNet 32 & ImageNet 64 & CelebA-HQ 256 \\
			\midrule
			Real NVP~{\scriptsize \citep{dinh2016density}} & 1.06 & 3.49 & 4.28 & 3.98& \;---\;\; \\
			Glow~{\scriptsize \citep{kingma2018glow}} & 1.05 & 3.35 & 4.09 & 3.81 & 1.03 \\
			FFJORD~{\scriptsize \citep{grathwohl2019ffjord}} & 0.99 & 3.40 & \;---\;\; & \;---\;\; &  \;---\;\;\\
			Flow++~{\scriptsize \citep{ho2019flowpp}} & \;---\;\; & 3.29 (3.09) & \;---\;\; (3.86) & \;---\;\; (3.69) &  \;---\;\;\\
			i-ResNet~{\scriptsize \citep{behrmann2019}} & 1.05 & 3.45 & \;---\;\; & \;---\;\; &  \;---\;\;\\
			\midrule
			Residual Flow (Ours) & \textbf{0.970} & \textbf{3.280} & \textbf{4.010} & \textbf{3.757} & \textbf{0.992} \\
			\bottomrule
	\end{tabular}}
\end{table}

\begin{figure}
    \centering
    \includegraphics[width=0.495\linewidth,trim=0 0 66 0, clip]{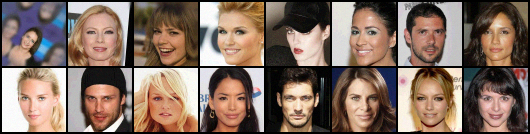}\hfill
    \includegraphics[width=0.495\linewidth,trim=0 0 66 0, clip]{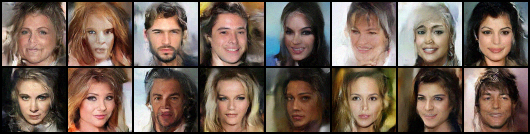}
    \caption{\textbf{Qualitative samples.} Real (left) and random samples (right) from a model trained on 5bit 64$\times$64 CelebA. The most visually appealing samples were picked out of 5 random batches.}
    \label{fig:celeba_samples}
\end{figure}

\subsection{Density \& Generative Modeling}
We use a similar architecture as \citet{behrmann2019}, except without the immediate invertible downsampling~\citep{dinh2016density} at the image pixel-level. Removing this substantially increases the amount of memory required~(shown in Figure~\ref{fig:memory}) as there are more spatial dimensions at every layer, but increases the overall performance. We also increase the bound on the Lipschitz constants of each weight matrix to $0.98$, whereas \citet{behrmann2019} used $0.90$ to reduce the error of the biased estimator. More detailed description of  architectures is in Appendix \ref{app:experiment_setup}.

Unlike prior works that use multiple GPUs, large batch sizes, and a few hundred epochs, Residual Flow models are trained with the standard batch size of 64 and converges in roughly 300-350 epochs for MNIST and CIFAR-10. Most network settings can fit on a single GPU~(see Figure~\ref{fig:memory}), though we use 4 GPUs in our experiments to speed up training. 
On CelebA-HQ, Glow had to use a batchsize of 1 per GPU with a budget of 40 GPUs whereas we trained our model using a batchsize of 3 per GPU and a budget of 4 GPUs, owing to the smaller model and memory-efficient backpropagation.

Table \ref{tab:density} reports the bits per dimension ($\log_2 p(x)/d$ where $x \in \R^d$) on standard benchmark datasets MNIST, CIFAR-10, downsampled ImageNet, and CelebA-HQ. We achieve competitive performance to state-of-the-art flow-based models on all datasets. 
For evaluation, we computed 20 terms of the power series~\eqref{eq:cov_res} and use the unbiased estimator~\eqref{eq:unbiased_logdet} to estimate the remaining terms. This reduces the standard deviation of the unbiased estimate of the test bits per dimension to a negligible level. 

Furthermore, it is possible to generalize the Lipschitz condition of Residual Flows to arbitrary p-norms and even mixed matrix norms. By learning the norm orders jointly with the model, we achieved a small gain of 0.003 bits/dim on CIFAR-10 compared to spectral normalization. In addition, we show that others norms like $p=\infty$ yielded constraints more suited for lower dimensional data. See Appendix \ref{app:generalized_sn} for a discussion on how to generalize the Lipschitz condition and an exploration of different norm-constraints for 2D problems and image data.

\subsection{Sample Quality}

We are also competitive with state-of-the-art flow-based models in regards to sample quality. Figure~\ref{fig:celeba_samples} shows random samples from the model trained on CelebA. Furthermore, samples from Residual Flow trained on CIFAR-10 are more globally coherent (Figure~\ref{fig:cifar10_samples}) than PixelCNN and variational dequantized Flow++, even though our likelihood is worse. 

\begin{wraptable}[15]{r}{0.33\linewidth}
	\centering
	\ra{1.2}
	\vspace{-1.11em}
	\setlength{\tabcolsep}{2pt}
	\caption{Lower FID implies better sample quality. $^*$Results taken from \citet{ostrovski2018autoregressive}.}
	\label{tab:fid_cifar10}
	\begin{tabular}{@{}lr@{}}\toprule
		Model & CIFAR10 FID\\
		\midrule
		PixelCNN$^*$ & 65.93 \\
		PixelIQN$^*$ & 49.46 \\
 		\hdashline
		\addlinespace[2pt]
		i-ResNet & 65.01 \\
		Glow & 46.90 \\
		Residual Flow \hspace{0.5em} & \textbf{46.37} \\
		\hdashline
		\addlinespace[2pt]
		DCGAN$^*$ & 37.11 \\
		WGAN-GP$^*$ & 36.40 \\
		\bottomrule
	\end{tabular}
\end{wraptable}
For quantitative comparison, we report FID scores~\citep{heusel2017gans} in Table~\ref{tab:fid_cifar10}. We see that Residual Flows significantly improves on i-ResNets and PixelCNN, and achieves slightly better sample quality than an official Glow model that has double the number of layers. It is well-known that visual fidelity and log-likelihood are not necessarily indicative of each other~\citep{theis2015note}, but we believe residual blocks may have a better inductive bias than coupling blocks or autoregressive architectures as generative models. More samples are in Appendix~\ref{app:samples}.

To generate visually appealing images, \citet{kingma2018glow} used temperature annealing (ie. sampling from $[p(x)]^{T^2}$ with $T<1$) to sample closer to the mode of the distribution, which helped remove artifacts from the samples and resulted in smoother looking images. However, this is done by reducing the entropy of $p(z)$ during sampling, which is only equivalent to temperature annealing if the change in log-density does not depend on the sample itself. Intuitively, this assumption implies that the mode of $p(x)$ and $p(z)$ are the same. 
As this assumption breaks for general flow-based models, including Residual Flows, we cannot use the same trick to sample efficiently from a temperature annealed model. Figure~\ref{fig:reduced_ent} shows the results of reduced entropy sampling on CelebA-HQ 256, but the samples do not converge to the mode of the distribution.
\begin{figure}
    \centering
    \begin{subfigure}[b]{0.49\textwidth}
        \centering
        \includegraphics[width=\linewidth, trim= 0 37px 0 241px, clip]{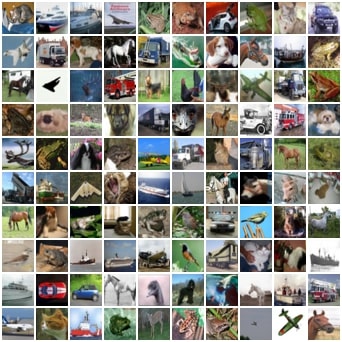}
        \vspace{-5mm}
        \caption*{CIFAR-10 Real Data}
    \end{subfigure}%
    \hfill
    \begin{subfigure}[b]{0.49\textwidth}
        \centering
        \includegraphics[width=\linewidth, trim= 0 71px 0 207px, clip]{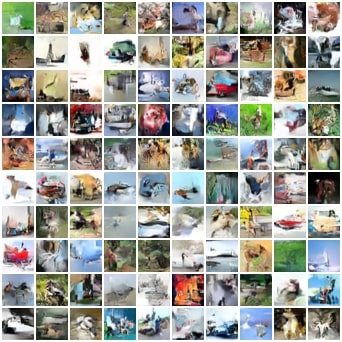}
        \vspace{-5mm}
        \caption*{Residual Flow (3.29 bits/dim)}
    \end{subfigure}\\
    \begin{subfigure}[b]{0.49\textwidth}
        \centering
        \includegraphics[width=\linewidth, trim = 0 314px 0 125px, clip]{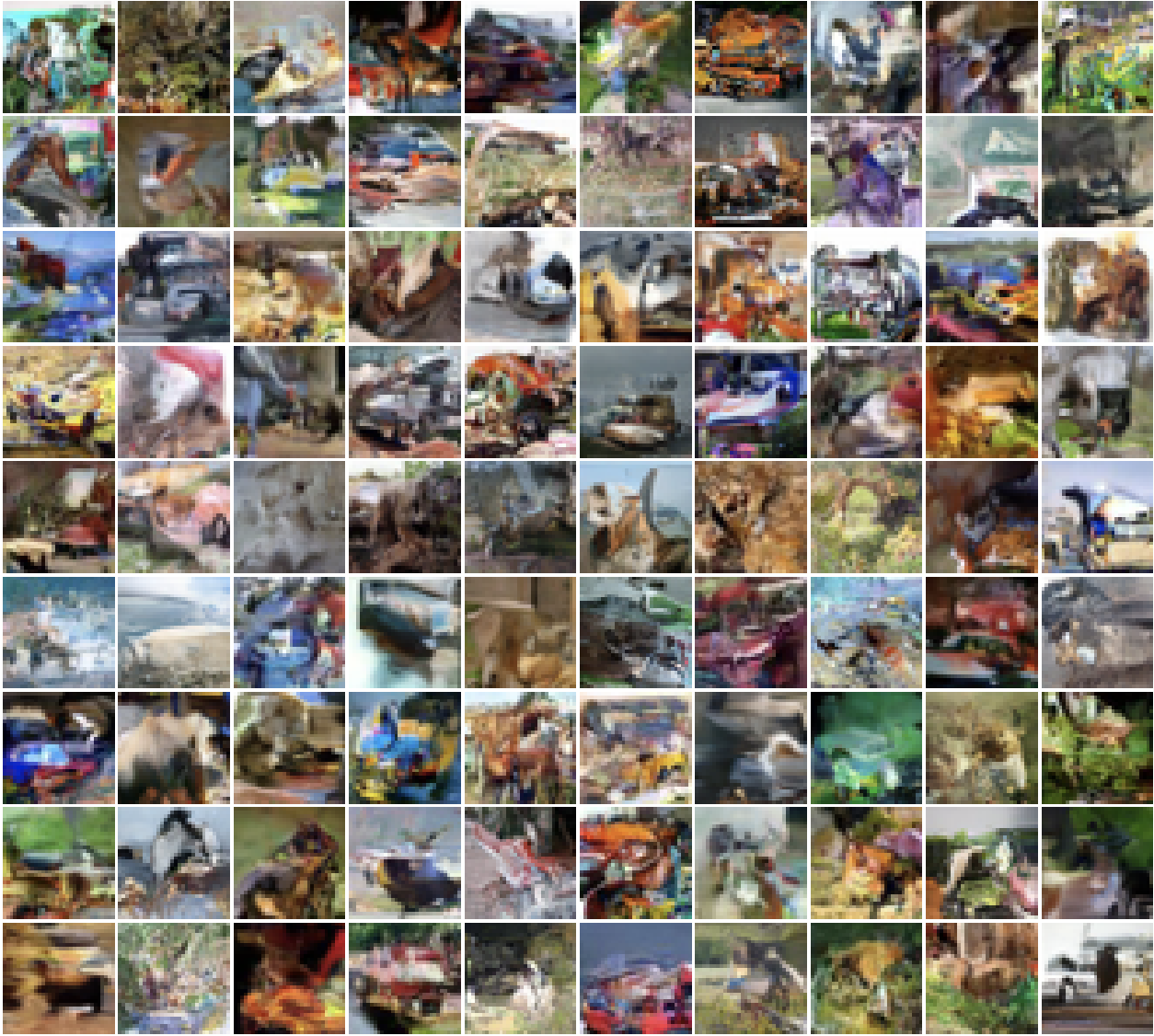}
        \vspace{-5mm}
        \caption*{PixelCNN (3.14 bits/dim)}
    \end{subfigure}%
    \hfill
    \begin{subfigure}[b]{0.49\textwidth}
        \centering
        \includegraphics[width=\linewidth, trim= 0 170px 0 103px, clip]{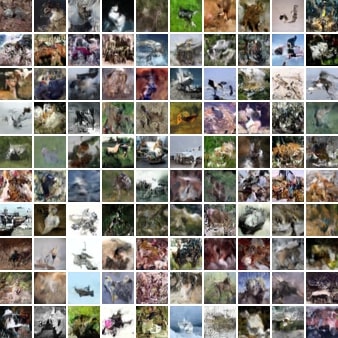}
        \vspace{-5mm}
        \caption*{Variational Dequantized Flow++ (3.08 bits/dim)}
    \end{subfigure}
    \caption{Random samples from Residual Flow are more globally coherent. PixelCNN~\citep{oord2016pixel} and Flow++ samples reprinted from \citet{ho2019flowpp}.}
    \label{fig:cifar10_samples}
\end{figure}

\begin{figure}
\centering
\begin{subfigure}[b]{0.12\linewidth}
	\centering
	\includegraphics[width=\linewidth]{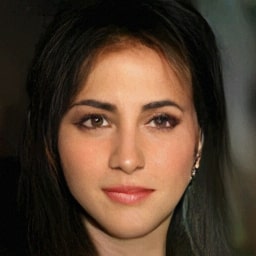}
\end{subfigure}%
\begin{subfigure}[b]{0.12\linewidth}
	\centering
	\includegraphics[width=\linewidth]{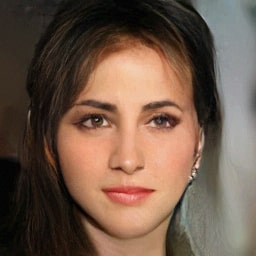}
\end{subfigure}%
\begin{subfigure}[b]{0.12\linewidth}
	\centering
	\includegraphics[width=\linewidth]{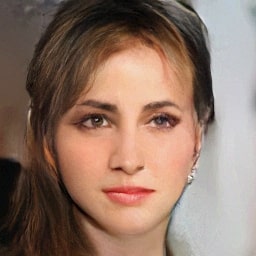}
\end{subfigure}%
\begin{subfigure}[b]{0.12\linewidth}
	\centering
	\includegraphics[width=\linewidth]{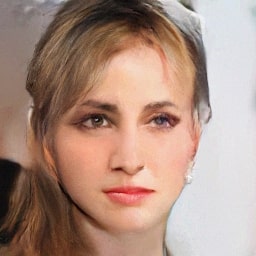}
\end{subfigure}\hspace{0.8em}
\begin{subfigure}[b]{0.12\linewidth}
	\centering
	\includegraphics[width=\linewidth]{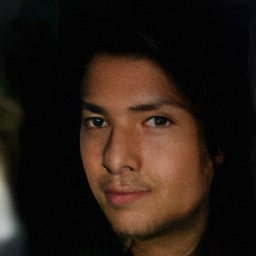}
\end{subfigure}%
\begin{subfigure}[b]{0.12\linewidth}
	\centering
	\includegraphics[width=\linewidth]{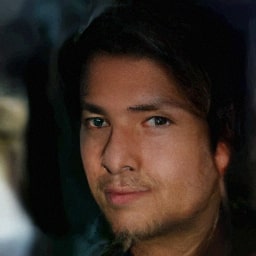}
\end{subfigure}%
\begin{subfigure}[b]{0.12\linewidth}
	\centering
	\includegraphics[width=\linewidth]{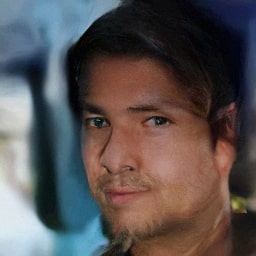}
\end{subfigure}%
\begin{subfigure}[b]{0.12\linewidth}
	\centering
	\includegraphics[width=\linewidth]{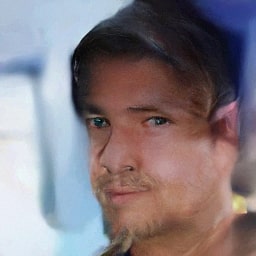}
\end{subfigure}

\begin{subfigure}[b]{0.12\linewidth}
	\centering
	\includegraphics[width=\linewidth]{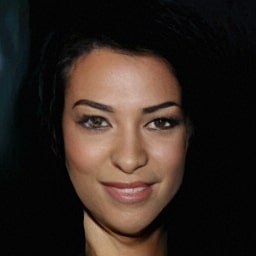}
	\vspace{-1.5em}
	\caption*{$T$=0.7}
\end{subfigure}%
\begin{subfigure}[b]{0.12\linewidth}
	\centering
	\includegraphics[width=\linewidth]{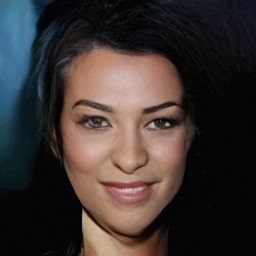}
	\vspace{-1.5em}
	\caption*{$T$=0.8}
\end{subfigure}%
\begin{subfigure}[b]{0.12\linewidth}
	\centering
	\includegraphics[width=\linewidth]{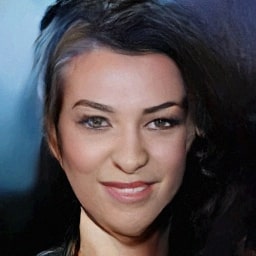}
	\vspace{-1.5em}
	\caption*{$T$=0.9}
\end{subfigure}%
\begin{subfigure}[b]{0.12\linewidth}
	\centering
	\includegraphics[width=\linewidth]{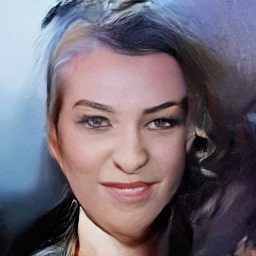}
	\vspace{-1.5em}
	\caption*{$T$=1.0}
\end{subfigure}\hspace{0.8em}
\begin{subfigure}[b]{0.12\linewidth}
	\centering
	\includegraphics[width=\linewidth]{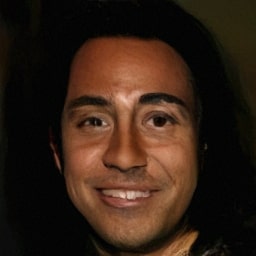}
	\vspace{-1.5em}
	\caption*{$T$=0.7}
\end{subfigure}%
\begin{subfigure}[b]{0.12\linewidth}
	\centering
	\includegraphics[width=\linewidth]{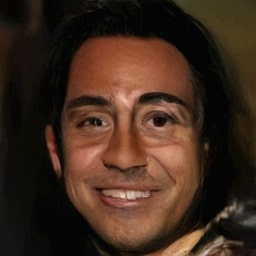}
	\vspace{-1.5em}
	\caption*{$T$=0.8}
\end{subfigure}%
\begin{subfigure}[b]{0.12\linewidth}
	\centering
	\includegraphics[width=\linewidth]{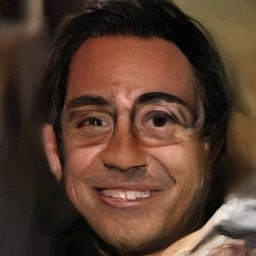}
	\vspace{-1.5em}
	\caption*{$T$=0.9}
\end{subfigure}%
\begin{subfigure}[b]{0.12\linewidth}
	\centering
	\includegraphics[width=\linewidth]{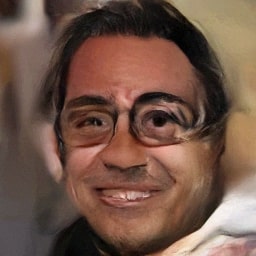}
	\vspace{-1.5em}
	\caption*{$T$=1.0}
\end{subfigure}
\caption{Reduced entropy sampling does not equate with proper temperature annealing for general flow-based models. Na\"ively reducing entropy results in samples that exhibit black hair and background, indicating that samples are not converging to the mode of the distribution.}
\label{fig:reduced_ent}
\end{figure}

\subsection{Ablation Experiments}
We report ablation experiments for the unbiased estimator and the LipSwish activation function in Table~\ref{tab:ablations}. Even in settings where the Lipschitz constant and bias are relatively low, we observe a significant improvement from using the unbiased estimator. Training the larger i-ResNet model on CIFAR-10 results in the biased estimator completely ignoring the actual likelihood objective altogether. In this setting, the biased estimate was lower than 0.8 bits/dim by 50 epochs, but the actual bits/dim wildly oscillates above 3.66 bits/dim and seems to never converge. Using LipSwish not only converges much faster but also results in better performance compared to softplus or ELU, especially in the high Lipschitz settings (Figure~\ref{fig:actfn} and Table~\ref{tab:ablations}). 

\begin{figure}
\centering
\begin{minipage}[b]{0.35\linewidth}
\centering
\centering
\includegraphics[width=\linewidth, trim=0 10px 0 5px, clip]{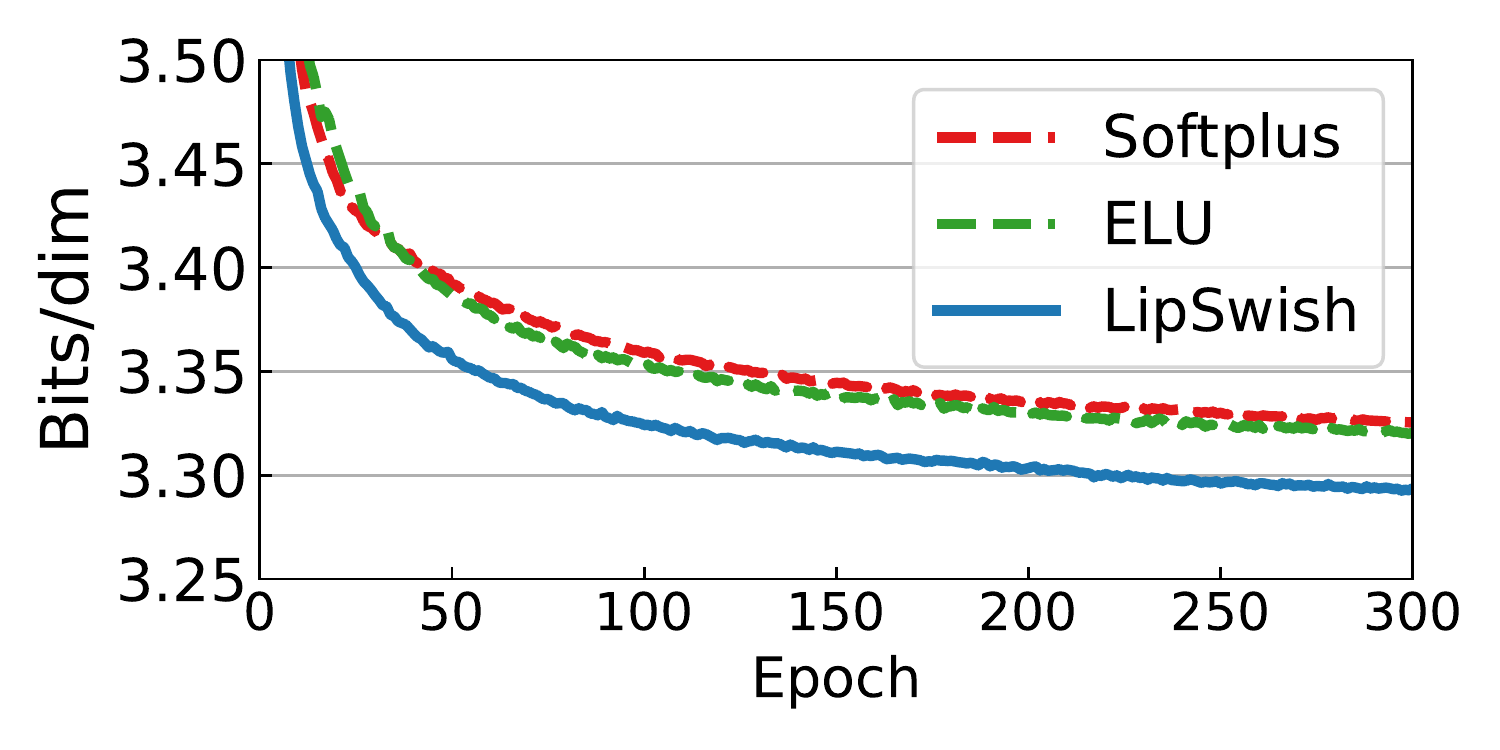}
\caption{Effect of activation\\ functions on CIFAR-10.}
\label{fig:actfn}
\end{minipage}\hfill
\begin{minipage}[b]{0.63\linewidth}
\centering
\captionsetup{type=table} 
\centering
\ra{1.2}
\resizebox{\textwidth}{!}{%
\begin{tabular}{@{}lccc@{}}\toprule
Training Setting & MNIST & CIFAR-10$^\dagger$ & CIFAR-10 \\
\midrule
i-ResNet + ELU & 1.05 & 3.45 & 3.66$\sim$4.78 \\
Residual Flow + ELU & 1.00 & 3.40 & 3.32 \\
Residual Flow + LipSwish & \textbf{0.97} & \textbf{3.39} & \textbf{3.28} \\
\bottomrule
\end{tabular}}
\vspace{0.5mm}
\caption{Ablation results. $^\dagger$Uses immediate downsampling before any residual blocks.}
\label{tab:ablations}
\end{minipage}
\end{figure}

\subsection{Hybrid Modeling}

\begin{table}[b]\centering
\ra{1.3}
\setlength{\tabcolsep}{2pt}
\caption{Comparison of residual vs. coupling blocks for the hybrid modeling task.}
\label{tab:hybrid}
\resizebox{\textwidth}{!}{%
\small
\begin{tabular}{@{} l ccccc c ccccc @{}}\toprule
& \multicolumn{5}{c}{\normalsize MNIST} & \hspace{1em} & \multicolumn{5}{c}{\normalsize SVHN} \\ 
\cmidrule(r){2-6} \cmidrule{8-12}
& $\lambda=0$ & \multicolumn{2}{c}{$\lambda=\nicefrac{1}{D}$} & \multicolumn{2}{c}{$\lambda=1$} & &
$\lambda=0$ & \multicolumn{2}{c}{$\lambda=\nicefrac{1}{D}$} & \multicolumn{2}{c}{$\lambda=1$} \\
\cmidrule{2-2} \cmidrule(lr){3-4} \cmidrule(r){5-6}
\cmidrule{8-8} \cmidrule(lr){9-10} \cmidrule{11-12}
{\normalsize Block Type} & Acc$\uparrow$ & \;\;\;BPD$\downarrow$ & Acc$\uparrow$ & \;\;\;BPD$\downarrow$ & Acc$\uparrow$ 
& & Acc$\uparrow$ & \;\;\;BPD$\downarrow$ & Acc$\uparrow$ & \;\;\;BPD$\downarrow$ & Acc$\uparrow$ \\
\midrule
\citet{nalisnick2019hybrid}\; & 99.33\% & \;\;\;1.26 & 97.78\% & \;\;\;$-$ & $-$ & & 95.74\% & \;\;\;2.40 & 94.77\% & $-$ & $-$ \\
\midrule

{\normalsize Coupling} 
& 99.50\% & \;\;\;1.18 & 98.45\% & \;\;\;1.04 & 95.42\% & 
& 96.27\% & \;\;\;2.73 & 95.15\% & \;\;\;2.21 & 46.22\% \\

{\normalsize \; + $1\times 1$ Conv} 
& \textbf{99.56\%} & \;\;\;1.15 & 98.93\% & \;\;\;1.03 & 94.22\% & 
& \textbf{96.72\%} & \;\;\;2.61 & 95.49\% & \;\;\;2.17 & 46.58\% \\

{\normalsize Residual} 
& 99.53\% & \;\;\;\textbf{1.01} & \textbf{99.46\%} & \;\;\;\textbf{0.99} & \textbf{98.69\%} &
& \textbf{96.72\%} & \;\;\;\textbf{2.29} & \textbf{95.79\%} & \;\;\;\textbf{2.06} & \textbf{58.52\%} \\

\bottomrule
\end{tabular}}
\end{table}

Next, we experiment on joint training of continuous and discrete data. Of particular interest is the ability to learn both a generative model and a classifier, referred to as a hybrid model which is useful for downstream applications such as semi-supervised learning and out-of-distribution detection~\citep{nalisnick2019hybrid}. Let $x$ be the data and $y$ be a categorical random variable. The maximum likelihood objective can be separated into $\log p(x,y) = \log p(x) + \log p(y|x)$, where $\log p(x)$ is modeled using a flow-based generative model and $\log p(y|x)$ is a classifier network that shares learned features from the generative model. However, it is often the case that accuracy is the metric of interest and log-likelihood is only used as a surrogate training objective. In this case, \citep{nalisnick2019hybrid} suggests a weighted maximum likelihood objective,
\begin{equation}
    \E_{(x,y) \sim p_{\textnormal{data}}} [\lambda \log p(x) + \log p(y|x)],
\end{equation}
where $\lambda$ is a scaling constant. As $y$ is much lower dimensional than $x$, setting $\lambda < 1$ emphasizes classification, and setting $\lambda = 0$ results in a classification-only model which can be compared against.

\begin{wraptable}[9]{r}{0.5\linewidth}
	\centering
	\ra{1.2}
	\vspace{0mm}
	\setlength{\tabcolsep}{2pt}
	\caption{Hybrid modeling results on CIFAR-10.}
	\label{tab:hybrid_cifar10}
	\resizebox{\linewidth}{!}{%
		\small
		\begin{tabular}{@{} l ccccc @{}}\toprule
			& $\lambda=0$ & \multicolumn{2}{c}{$\lambda=\nicefrac{1}{D}$} & \multicolumn{2}{c}{$\lambda=1$} \\
			\cmidrule{2-2} \cmidrule(lr){3-4} \cmidrule{5-6} 
			{\normalsize Block Type} & Acc$\uparrow$ & \;\;\;BPD$\downarrow$ & Acc$\uparrow$ & \;\;\;BPD$\downarrow$ & Acc$\uparrow$ \\
			\midrule
			{\normalsize Coupling} & 89.77\% & \;\;\;4.30 & 87.58\% & \;\;\;3.54 & 67.62\%\\
			
			{\normalsize \; + $1\times 1$ Conv} & 90.82\% & \;\;\;4.09 & 87.96\% & \;\;\;3.47 & 67.38\% \\
			
			{\normalsize Residual} & \textbf{91.78\%} & \;\;\;\textbf{3.62} & \textbf{90.47\%} & \;\;\;\textbf{3.39} & \textbf{70.32\%} \\
			\bottomrule
	\end{tabular}}
\end{wraptable}
Since \citet{nalisnick2019hybrid} performs approximate Bayesian inference and uses a different architecture than us, we perform our own ablation experiments to compare residual blocks to coupling blocks~\citep{dinh2014nice} as well as 1$\times$1 convolutions~\citep{kingma2018glow}. We use the same architecture as the density estimation experiments and append a classification branch that takes features at the final output of multiple scales (see details in Appendix \ref{app:experiment_setup}). This allows us to also use features from intermediate blocks whereas \citet{nalisnick2019hybrid} only used the final output of the entire network for classification. Our implementation of coupling blocks uses the same architecture for $g(x)$ except we use ReLU activations and no longer constrain the Lipschitz constant.

Tables~\ref{tab:hybrid} \&~\ref{tab:hybrid_cifar10} show our experiment results. Our architecture outperforms \citet{nalisnick2019hybrid} on both pure classification and hybrid modeling. Furthermore, on MNIST we are able to jointly obtain a decent classifier and a strong density model over all settings. In general, we find that residual blocks perform much better than coupling blocks at learning representations for both generative and discriminative tasks. Coupling blocks have very high bits per dimension when $\lambda = \nicefrac{1}{D}$ while performing worse at classification when $\lambda = 1$, suggesting that they have restricted flexibility and can only perform one task well at a time.

\section{Conclusion}
We have shown that invertible residual networks can be turned into powerful generative models. 
The proposed unbiased flow-based generative model, coined Residual Flow, achieves competitive or better performance compared to alternative flow-based models in density estimation, sample quality, and hybrid modeling. 
More generally, we gave a recipe for introducing stochasticity in order to construct tractable flow-based models with a different set of constraints on layer architectures than competing approaches, which rely on exact log-determinant computations.  This opens up a new design space of expressive but Lipschitz-constrained architectures that has yet to be explored.

\section*{Acknowledgments}
Jens Behrmann gratefully acknowledges the financial support from the German Science Foundation for RTG 2224 ``$\pi^3$: Parameter Identification - Analysis, Algorithms, Applications''

{
\bibliographystyle{plainnat}
\bibliography{references}
}

\onecolumn
\appendix

\section{Random Samples}\label{app:samples}

\begin{figure}[H]
    \centering
    \begin{subfigure}[b]{0.49\textwidth}
        \centering
        \includegraphics[width=\linewidth, trim= 0 0 0 35px, clip]{imgs/cifar10_real.jpg}
        \caption*{Real Data}
    \end{subfigure}%
    \hfill
    \begin{subfigure}[b]{0.49\textwidth}
        \centering
        \includegraphics[width=\linewidth, trim= 0 0 0 35px, clip]{imgs/cifar10_resflow.jpg}
        \caption*{Residual Flow}
    \end{subfigure}\\
    \vspace{1em}
    \begin{subfigure}[b]{0.49\textwidth}
        \centering
        \includegraphics[width=\linewidth]{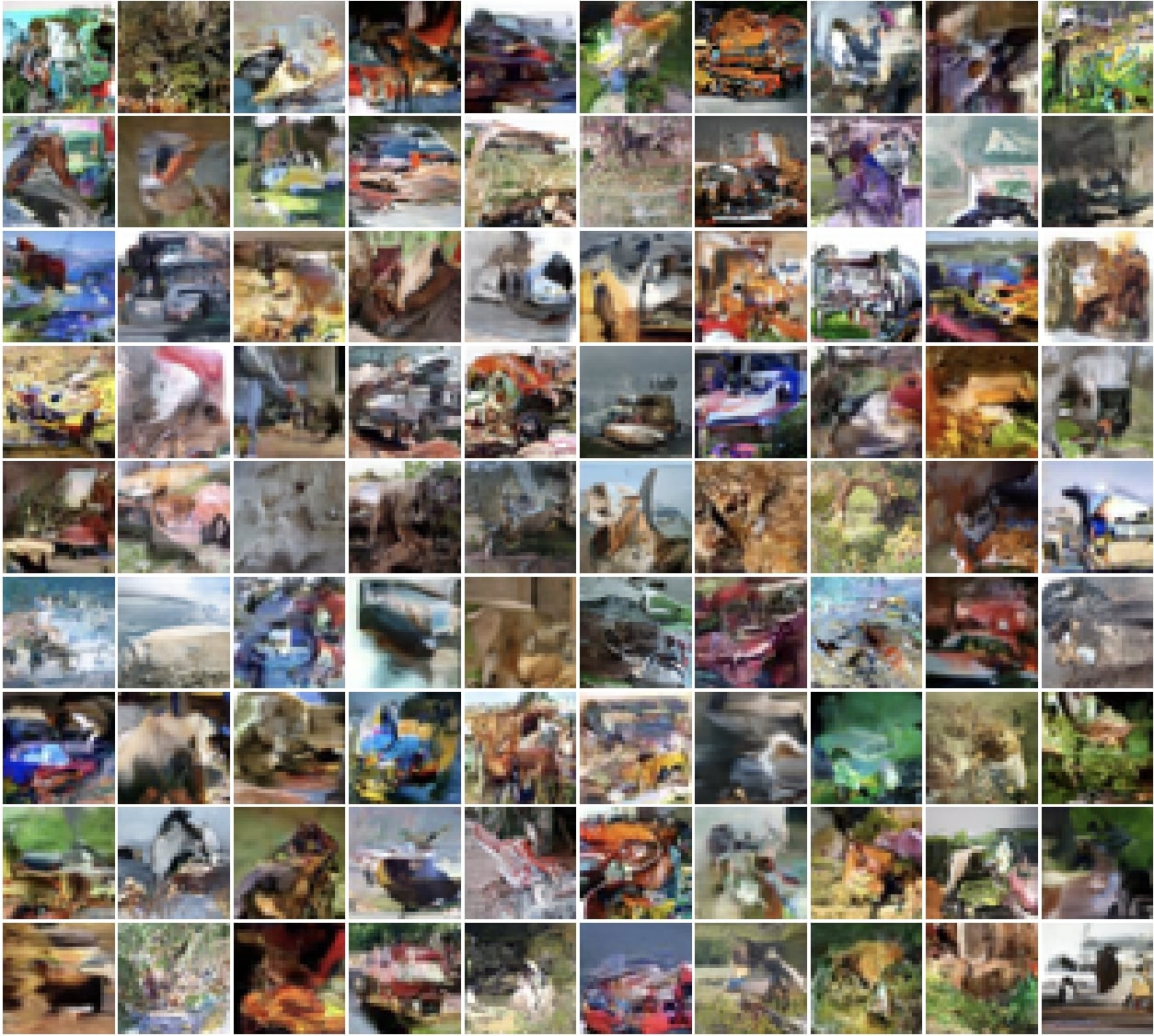}
        \caption*{PixelCNN}
    \end{subfigure}%
    \hfill
    \begin{subfigure}[b]{0.49\textwidth}
        \centering
        \includegraphics[width=\linewidth, trim= 0 0 0 34px, clip]{imgs/cifar10_flowpp.jpg}
        \caption*{Flow++}
    \end{subfigure}
    \caption{Random samples from CIFAR-10 models. PixelCNN~\citep{oord2016pixel} and Flow++ samples reprinted from \citet{ho2019flowpp}, with permission.}
\end{figure}

\pagebreak

\begin{figure}[H]
    \centering
    \begin{subfigure}[b]{0.49\textwidth}
        \centering
        \includegraphics[width=\linewidth, trim= 0 0 0 31px, clip]{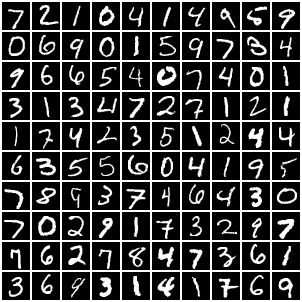}
        \caption*{Real Data}
    \end{subfigure}%
    \hfill
    \begin{subfigure}[b]{0.49\textwidth}
        \centering
        \includegraphics[width=\linewidth, trim= 0 0 0 31px, clip]{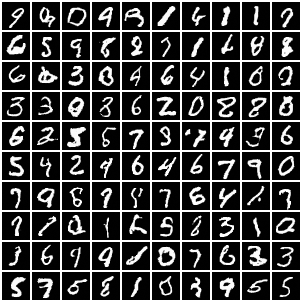}
        \caption*{Residual Flow}
    \end{subfigure}
    \caption{Random samples from MNIST.}
\end{figure}

\begin{figure}[H]
    \centering
    \begin{subfigure}[b]{0.49\textwidth}
        \centering
        \includegraphics[width=\linewidth]{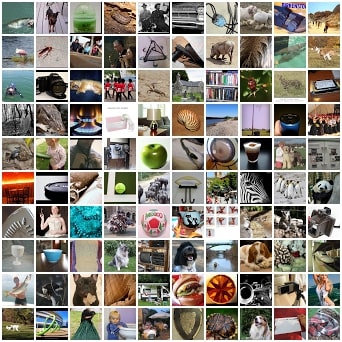}
        \caption*{Real Data}
    \end{subfigure}%
    \hfill
    \begin{subfigure}[b]{0.49\textwidth}
        \centering
        \includegraphics[width=\linewidth]{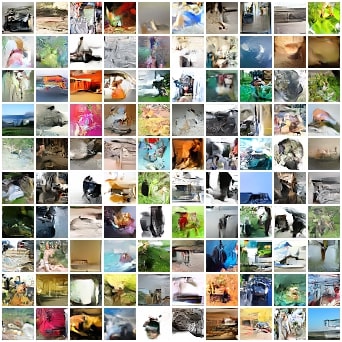}
        \caption*{Residual Flow}
    \end{subfigure}
    \caption{Random samples from ImageNet 32$\times$32.}
\end{figure}

\pagebreak

\begin{figure}[H]
    \centering
    \begin{subfigure}[b]{0.49\textwidth}
        \centering
        \includegraphics[width=\linewidth]{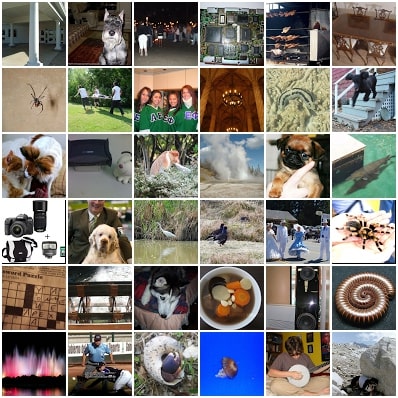}
        \caption*{Real Data}
    \end{subfigure}%
    \hfill
    \begin{subfigure}[b]{0.49\textwidth}
        \centering
        \includegraphics[width=\linewidth]{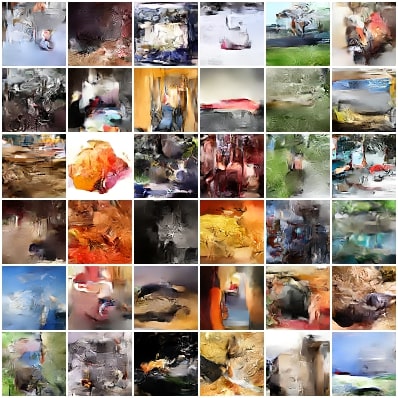}
        \caption*{Residual Flow}
    \end{subfigure}
    \caption{Random samples from ImageNet 64$\times$64.}
\end{figure}

\begin{figure}[H]
    \centering
    \begin{subfigure}[b]{0.49\textwidth}
        \centering
        \includegraphics[width=\linewidth]{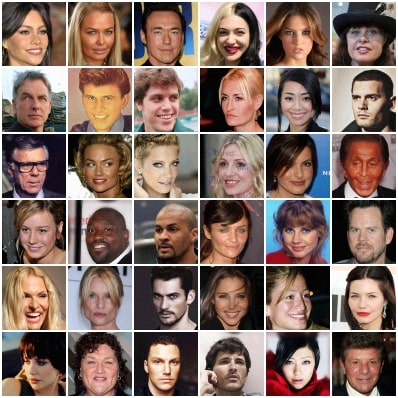}
        \caption*{Real Data}
    \end{subfigure}%
    \hfill
    \begin{subfigure}[b]{0.49\textwidth}
        \centering
        \includegraphics[width=\linewidth]{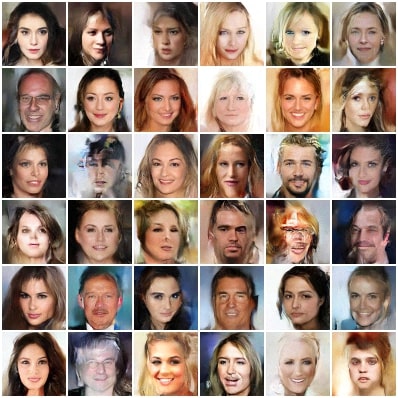}
        \caption*{Residual Flow}
    \end{subfigure}
    \caption{Random samples from 5bit CelebA-HQ 64$\times$64.}
\end{figure}

\begin{figure}[H]
	\centering
	\begin{subfigure}[b]{0.49\textwidth}
		\centering
		\includegraphics[width=\linewidth]{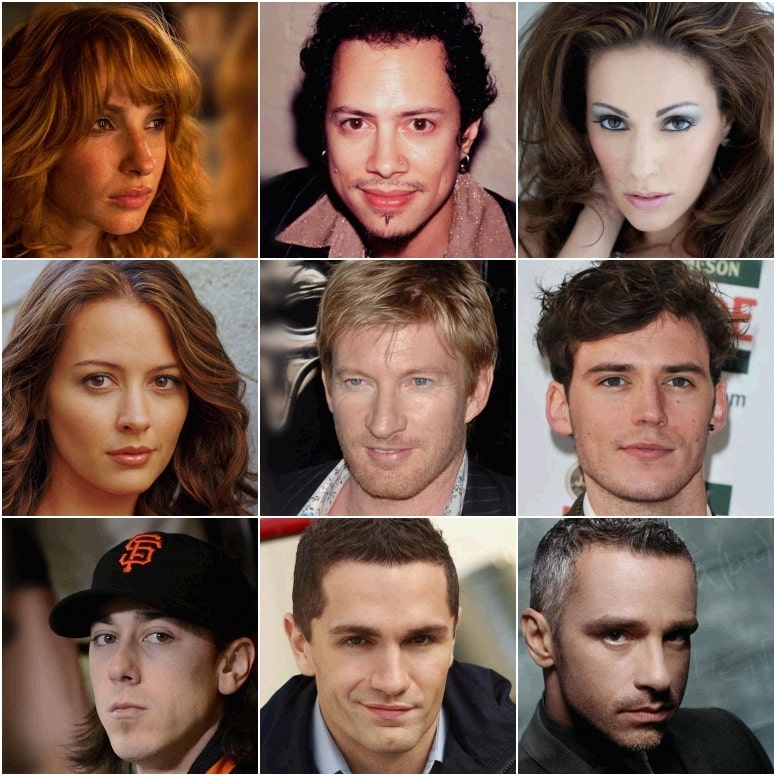}
		\caption*{Real Data}
	\end{subfigure}%
	\hfill
	\begin{subfigure}[b]{0.49\textwidth}
		\centering
		\includegraphics[width=\linewidth]{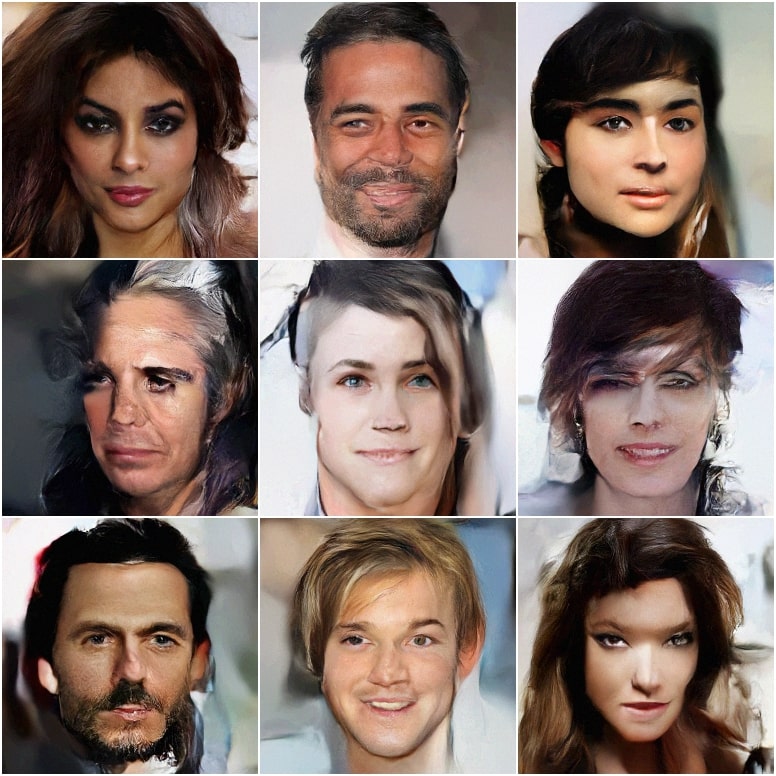}
		\caption*{Residual Flow}
	\end{subfigure}
	\caption{Random samples from 5bit CelebA-HQ 256$\times$256. Most visually appealing batch out of five was chosen.}
\end{figure}

\section{Proofs}
We start by formulating a Lemma, which gives the condition when the randomized truncated series is an unbiased estimator in a fairly general setting. Afterwards, we study our specific estimator and prove that the assumption of the Lemma is satisfied. 

Note, that similar conditions have been stated in previous works, e.g. in \citet{mcleish2010general} and \citet{rhee2012new}. However, we use the condition from \citet{bouchardcote}, which only requires $p(N)$ to have sufficient support.

To make the derivations self-contained, we reformulate the conditions from \citet{bouchardcote} in the following way:
\begin{lemma}[Unbiased randomized truncated series]
\label{lem:conditionConvergence}
Let $Y_k$ be a real random variable with $\lim_{k \rightarrow \infty} \mathbb{E}[Y_k] = a$ for some $a\in \mathbb{R}$. Further, let $\Delta_0 = Y_0$ and $\Delta_k = Y_k - Y_{k-1}$ for $k \geq 1$. Assume 
\begin{align*}
    \mathbb{E}\left[\sum_{k=0}^\infty |\Delta_k| \right] < \infty
\end{align*}
and let $N$ be a random variable with support over the positive integers and $n \sim p(N)$. Then for
\begin{align*}
    Z = \sum_{k=0}^n \frac{\Delta_k}{\mathbb{P}(N \geq k)},
\end{align*} 
it holds 
\begin{align*}
    a = \lim_{k \rightarrow \infty} \mathbb{E}[Y_k] = \mathbb{E}_{n \sim p(N)}[Z] = a.
\end{align*}
\end{lemma}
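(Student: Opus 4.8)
The plan is to compute the full expectation $\E[Z]$ directly and show it collapses to $a$ by telescoping, with the single nontrivial analytic step being an interchange of expectation and infinite summation that is licensed precisely by the integrability hypothesis. Throughout I treat $n \sim p(N)$ as independent of the sequence $\{Y_k\}$ (equivalently $\{\Delta_k\}$), and I use the support assumption to guarantee $\mathbb{P}(N \geq k) > 0$ for every $k$, so that the weights $1/\mathbb{P}(N \geq k)$ are well defined (note $\mathbb{P}(N \geq 0) = 1$, so the $k=0$ index is handled uniformly with the rest).

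First I would record the telescoping identity. Since $Y_k = \sum_{j=0}^k \Delta_j$, the hypothesis $\lim_{k\to\infty}\E[Y_k] = a$ says exactly that the partial sums $\sum_{j=0}^k \E[\Delta_j] = \E[Y_k]$ converge to $a$, i.e.\ $\sum_{k=0}^\infty \E[\Delta_k] = a$. This series even converges absolutely, because $\sum_k |\E[\Delta_k]| \leq \sum_k \E|\Delta_k| = \E[\sum_k |\Delta_k|] < \infty$, where the middle equality is Tonelli's theorem and the final bound is the assumption.

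Next I would rewrite the estimator using an indicator, $Z = \sum_{k=0}^\infty \frac{\Delta_k}{\mathbb{P}(N \geq k)} \mathbbm{1}_{n \geq k}$, and take expectation over the randomness in both $\{\Delta_k\}$ and $n$. To exchange $\E$ with the infinite sum I would invoke Fubini's theorem, whose hypothesis I verify via Tonelli using the independence of $n$ and $\{\Delta_k\}$:
\[
\E\left[ \sum_{k=0}^\infty \frac{|\Delta_k|}{\mathbb{P}(N \geq k)} \mathbbm{1}_{n \geq k} \right] = \sum_{k=0}^\infty \frac{\E|\Delta_k|\, \mathbb{P}(n \geq k)}{\mathbb{P}(N \geq k)} = \sum_{k=0}^\infty \E|\Delta_k| < \infty,
\]
where I used $\mathbb{P}(n \geq k) = \mathbb{P}(N \geq k)$. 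With integrability established, the identical computation without absolute values gives $\E[Z] = \sum_k \E[\Delta_k]\,\mathbb{P}(N\geq k)/\mathbb{P}(N\geq k) = \sum_k \E[\Delta_k] = a$, completing the argument.

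The main obstacle is entirely this interchange of expectation and infinite summation: without it the manipulation is purely formal, which is exactly the subtlety the paper flags after \eqref{eq:rr_estimator}. The hypothesis $\E[\sum_k |\Delta_k|] < \infty$ is tailored to supply the Tonelli bound, and the condition that $N$ has support over all the positive integers is what keeps each weight finite and forces the clean cancellation $\mathbb{P}(n\geq k)/\mathbb{P}(N \geq k) = 1$. Everything else is bookkeeping, so I expect no difficulty beyond stating the independence of $n$ from $\{Y_k\}$ explicitly and checking that the absolute-value version of the sum is genuinely what the integrability assumption controls.
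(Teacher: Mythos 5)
Your proof is correct and follows essentially the same route as the paper's: the paper applies the monotone convergence theorem to the absolute-valued truncated sums $B_M$ and then dominated convergence to the signed sums $Z_M$, which is exactly the Tonelli-then-Fubini interchange you perform, with the weights $1/\mathbb{P}(N \geq k)$ cancelling against $\mathbb{P}(n \geq k)$ by independence in both cases. Your explicit statement of the telescoping identity $\sum_k \mathbb{E}[\Delta_k] = a$ and of the independence of $n$ from $\{\Delta_k\}$ (which the paper uses only implicitly) are clarifications of the same argument, not a different one.
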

\begin{proof}
First, denote
\begin{align*}
    Z_M = \sum_{k=0}^M \frac{\mathbbm{1}[N \geq k] \Delta_k}{\mathbb{P}(N \geq k)} \quad \text{and} \quad B_M = \sum_{k=0}^M \frac{\mathbbm{1}[N \geq k] |\Delta_k|}{\mathbb{P}(N \geq k)},
\end{align*}
where $|Z_M| \leq B_M$ by the triangle inequality. Since $B_M$ is non-decreasing, the monotone convergence theorem allows swapping the expectation and limit as $\mathbb{E}[B] = \mathbb{E}[\lim_{M \rightarrow \infty}B_M] = \lim_{M \rightarrow \infty} \mathbb{E}[B_M]$. Furthermore, it is
\begin{align*}
    \mathbb{E}[B] &=\lim_{M \rightarrow \infty} \mathbb{E}[B_M] = \lim_{M \rightarrow \infty} \sum_{k=0}^M \mathbb{E}\left[ \frac{\mathbbm{1}[N \geq k] |\Delta_k|}{\mathbb{P}(N \geq k)} \right]\\ &= \lim_{M \rightarrow \infty} \sum_{k=0}^M \frac{\mathbbm{P}(N \geq k)  \mathbb{E}|\Delta_k|}{\mathbb{P}(N \geq k)}  
    = \mathbb{E}\left[ \lim_{M \rightarrow \infty} \sum_{k=0}^M |\Delta_k] \right] < \infty,
\end{align*}
where the assumption is used in the last step. Using the above, the dominated convergence theorem can be used to swap the limit and expectation for $Z_M$. Using similar derivations as above, it is
\begin{align*}
    \mathbb{E}[Z] &=\lim_{M \rightarrow \infty} \mathbb{E}[Z_M]  
    = \lim_{M \rightarrow \infty} \mathbb{E}\left[\sum_{k=0}^M \Delta_k\right] 
    = \lim_{M \rightarrow \infty} \mathbb{E}\left[Y_k\right] = a, 
\end{align*}
where we used the definition of $Y_M$ and $\lim_{k \rightarrow \infty} \mathbb{E}[Y_k] = a$.
\end{proof}

\label{sec:proofs}
\begin{proof}\textbf{(Theorem \ref{thm:unbiasedEst})}\\
To simplify notation, we denote $J := J_g (x)$. Furthermore, let
\begin{align*}
    Y_N = \E_{v} \left[ \sum_{k=1}^N \frac{(-1)^{k+1}}{k} v^T J^k v\right]
\end{align*}
denote the real random variable and let $\Delta_0 = Y_0$ and $\Delta_k = Y_k - Y_{k-1}$ for $k \geq 1$, as in Lemma \ref{lem:conditionConvergence}. To prove the claim of the theorem, we can use Lemma \ref{lem:conditionConvergence} and we only need to prove that the assumption $\mathbb{E}_v[\sum_{k=1}^\infty |\Delta_k|] < \infty$ holds for this specific case.

In order to exchange summation and expectation via Fubini's theorem, we need to prove that $\sum_{k=1}^\infty |\Delta_k| < \infty$ for all $v$. Using the assumption $\Lip(g) < 1$, it is
\begin{align*}
    \sum_{k=1}^\infty |\Delta_k| &= \sum_{k=1}^\infty \left|\frac{(-1)^{k+1}}{k} v^T J^k v\right| = \sum_{k=1}^\infty \frac{\|v^T J^k v\|_2}{k} \leq \sum_{k=1}^\infty \frac{\|v^T\|_2 \|J^k\|_2 \|v\|_2}{k} \\
    &\leq 2 \|v\|_2 \sum_{k=1}^\infty \frac{ \|J\|^k_2 }{k} \leq 2 \|v\|_2 \sum_{k=1}^\infty \frac{ \Lip(g)^k_2 }{k} = 2 \|v\|_2 \log\big(1 - \Lip(g)\big) < \infty,
\end{align*}
for an arbitrary $v$. Hence,
\begin{align}
\label{eq:deltaSeries}
    \mathbb{E}_v \left[\sum_{k=1}^\infty |\Delta_k|\right]  = \sum_{k=1}^\infty \mathbb{E}_v[ |\Delta_k|].
\end{align}

Since $\tr(A) = \mathbb{E}_v[v^T A v]$ for $v \sim \mathcal{N}(0,I)$ via the Skilling-Hutchinson trace estimator \citep{hutchinson1990stochastic, skilling1989eigenvalues}, it is
\begin{align*}
    \mathbb{E}_v[ |\Delta_k|] = \left| \frac{\tr(J^k)}{k} \right|.
\end{align*}
To show that \eqref{eq:deltaSeries} is bounded, we derive the bound

\begin{align*}
    \frac{1}{k} |\tr(J^k)| \leq \frac{1}{k} \left|\sum_{i=d}^d \lambda_i(J^k)\right| \leq \frac{1}{k} \sum_{i=d}^d |\lambda_i(J^k)| \leq \frac{d}{k}  \rho(J^k) 
     \leq \frac{d}{k} \|J^k\|_2 \leq \frac{d}{k} \Lip(g)^k,
\end{align*}
where $\lambda(J^k)$ denote the eigenvalues and $\rho(J^k)$ the spectral radius. Inserting this bound into \eqref{eq:deltaSeries} and using $\Lip(g) < 1$ yields
\begin{align*}
     \mathbb{E}_v[ |\Delta_k|]  \leq d \sum_{k=1}^\infty \frac{\Lip(g)^k}{k} = -d \, \log\big(1 - \Lip(g)\big) < \infty.
\end{align*}
Hence, the assumption of Lemma \ref{lem:conditionConvergence} is verified.
\end{proof}

\begin{proof}\textbf{(Theorem \ref{thm:unbiasedGrad})}\\
The result can be proven in an analogous fashion to the proof of Theorem \ref{thm:unbiasedEst}, which is why we only present a short version without all steps. 

By obtaining the bound
\begin{align*}
\sum_{k=0}^\infty \left| (-1)^k v^T \left( J(x, \theta)^k \frac{\partial (J_g(x,\theta))}{\partial \theta} \right) v \right| &\leq 2 \|v\|_2 \left\| \frac{\partial (J_g(x,\theta))}{\partial \theta}\right\| \sum_{k=0}^\infty \Lip(g)^k \\
&= 2 \|v\|_2 \left\| \frac{\partial (J_g(x,\theta))}{\partial \theta}\right\| \frac{1}{1-\Lip(g)} < \infty,
\end{align*}
Fubini's theorem can be applied to swap the expection and summation. Furthermore, by using the trace estimation and similar bounds as in the proof of Theorem \ref{thm:unbiasedEst}, the assumption $\mathbb{E}\left[\sum_{k=0}^\infty |\Delta_k| \right] < \infty$ from Lemma \ref{lem:conditionConvergence} can be proven. 

\end{proof}

\section{Memory-Efficient Gradient Estimation of Log-Determinant}\label{app:gradient}

Derivation of gradient estimator via differentiating power series:
\begin{align*}
\frac{\partial}{\partial \theta_i} \log\det \big(I + J_g(x,\theta)\big) &= \frac{\partial}{\partial \theta_i} \left(\sum_{k=1}^\infty  (-1)^{k+1}\frac{\tr(J_g(x,\theta)^k)}{k}\right)\\
&=  \tr \left(\sum_{k=1}^\infty   \frac{(-1)^{k+1}}{k} \frac{\partial (J_g(x,\theta)^k)}{\partial \theta_i} \right)
\end{align*}

Derivation of memory-efficient gradient estimator:
\begin{align}
& \frac{\partial}{\partial \theta_i} \log\det \big(I + J_g(x,\theta)\big) \nonumber \\ 
&= \frac{1}{\det(I+J_g(x, \theta))} \left[\frac{\partial}{\partial \theta_i} \det \big(I + J_g(x,\theta)\big)\right] \label{eq:chainRule}\\
&= \frac{1}{\det(I+J_g(x, \theta))} \left[\det(I+J_g(x, \theta)) \tr \left((I+J(x, \theta))^{-1} \frac{\partial (I + J_g(x,\theta))}{\partial \theta_i} \right)\right] \label{eq:JacobiFormula}\\ 
&= \tr \left((I+J_g(x, \theta))^{-1} \frac{\partial (I + J_g(x,\theta))}{\partial \theta_i} \right) \nonumber \\ 
&= \tr \left((I+J_g(x, \theta))^{-1} \frac{\partial (J_g(x,\theta))}{\partial \theta_i} \right) \nonumber \\
&= \tr \left( \left[\sum_{k=0}^\infty (-1)^k  J_g(x, \theta)^k \right]\frac{\partial (J_g(x,\theta))}{\partial \theta_i} \right). \label{eq:neummanSeries2}
\end{align}
Note, that \eqref{eq:chainRule} follows from the chain rule of differentiation, for the derivative of the determinant in \eqref{eq:JacobiFormula}, see \citep[eq. 46]{matrixCookbook}. Furthermore, \eqref{eq:neummanSeries2} follows from the properties of a Neumann-Series which converges due to $\|J_g(x, \theta)\| < 1$.

Hence, if we are able to compute the trace exactly, both approaches will return the same values for a given truncation $n$. However, when estimating the trace via the Hutchinson trace estimator the estimation is not equal in general:
\begin{align*}
v^T \left(\sum_{k=1}^\infty   \frac{(-1)^{k+1}}{k} \frac{\partial (J_g(x,\theta)^k)}{\partial \theta_i} \right) v
\neq v^T \left(\left[\sum_{k=0}^\infty   (-1)^{k} J_g^{k}(x,\theta)\right]\frac{\partial (J_g(x,\theta))}{\partial \theta_i} \right) v .
\end{align*}
This is due to the terms in these infinite series being different.

Another difference between both approaches is their memory consumption of the corresponding computational graph. The summation $\sum_{k=0}^\infty   (-1)^{k} J_g^{k}(x,\theta)$ is not being tracked for the gradient, which allows to compute the gradient with constant memory with respect to the truncation $n$.

\section{Generalizing Lipschitz Constraints to Induced Mixed Norms}\label{app:generalized_sn}

\citet{behrmann2019} used spectral normalization~\citep{miyato2018spectral} (which relies on power iteration to approximate the spectral norm) to enforce the Lipschitz constraint on $g$. Specifically, this bounds the spectral norm of the Jacobian $J_g$ by the sub-multiplicativity property. If $g(x)$ is a neural network with pre-activation defined recursively using $z_l = W_lh_{l-1} + b_l$ and $h_l = \phi(z_l)$, with $x=z_0, g(x)=z_L$, then the data-independent upper bound
\begin{equation}\label{eq:submult}
    ||J_g||_2 = ||W_L \phi'(z_{L-1})\cdots W_2 \phi'(z_1) W_1 \phi'(z_0)||_2 \leq ||W_1||_2 \cdots ||W_L||_2
\end{equation}
holds, where $\phi'(z)$ are diagonal matrices containing the first derivatives of the activation functions. The inequality in \eqref{eq:submult} is a result of using a sub-multiplicative norm and assuming that the activation functions have Lipschitz less than unity. However, any induced matrix norm satisfies the sub-multiplicativity property, including mixed norms $||W||_{p\rightarrow q} := \sup_{x \neq 0} \nicefrac{||Wx||_q}{||x||_p}$, 
where the input and output spaces have different vector norms.

As long as $g(x)$ maps back to the original normed (complete) vector space, the Banach fixed point theorem used in the proof of invertibility of residual blocks~\citep{behrmann2019} still holds. As such, we can choose arbitrary $p_0,\dots,p_{L-2} \in [1,\infty]$ such that
\begin{equation}\label{eq:submultMixed}
    ||J_g||_{p_0} \leq ||W_1||_{p_0\rightarrow p_1} ||W_2||_{p_1 \rightarrow p_2} \cdots ||W_L||_{p_{L-2}\rightarrow p_0}.
\end{equation}
We use a more general form of power iteration~\citep{qetlab} for estimating induced mixed norms, which becomes the standard power iteration for $p=q=2$. Furthermore, the special cases where $p_l=1$ or $p_l=\infty$ are of particular interest, as the matrix norms can be computed exactly~\citep{troppPhdThesis}. Additionally, we can also optimize the norm orders during training by backpropagating through the modified power method. Lastly, we emphasize that the convergence of the infinite series \eqref{eq:cov_res} is guaranteed for any induced matrix norm, as they still upper bound the spectral radius \citep{horn2012matrix}.



\begin{figure}[t]
    \centering
    \begin{subfigure}[b]{0.5\textwidth}
        \centering
        \includegraphics[width=\linewidth, trim=0 8px 0 10px, clip]{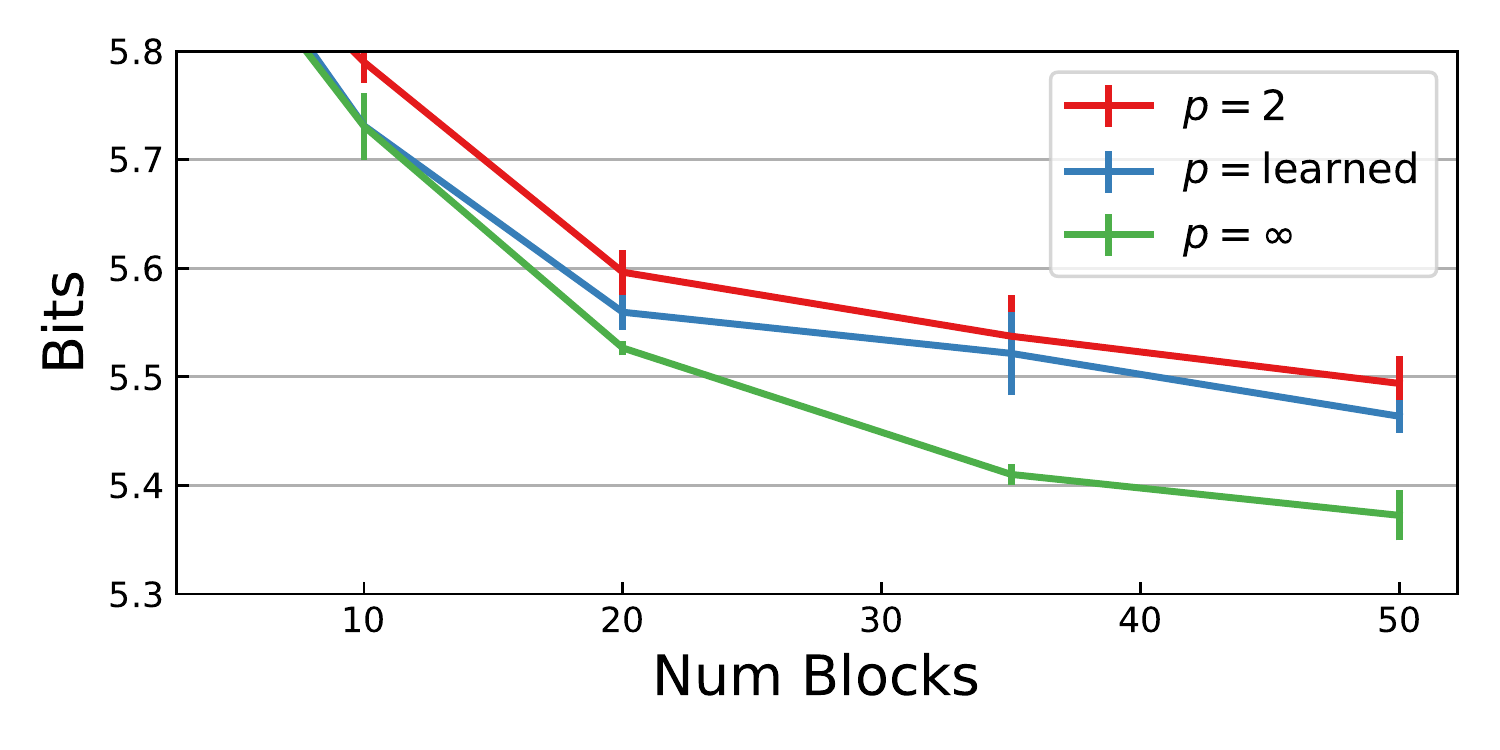}
        \caption{Checkerboard 2D}
        \label{fig:checkerboard_norms}
    \end{subfigure}%
    \begin{subfigure}[b]{0.5\textwidth}
        \centering
        \includegraphics[width=\linewidth, trim=0 8px 0 10px, clip]{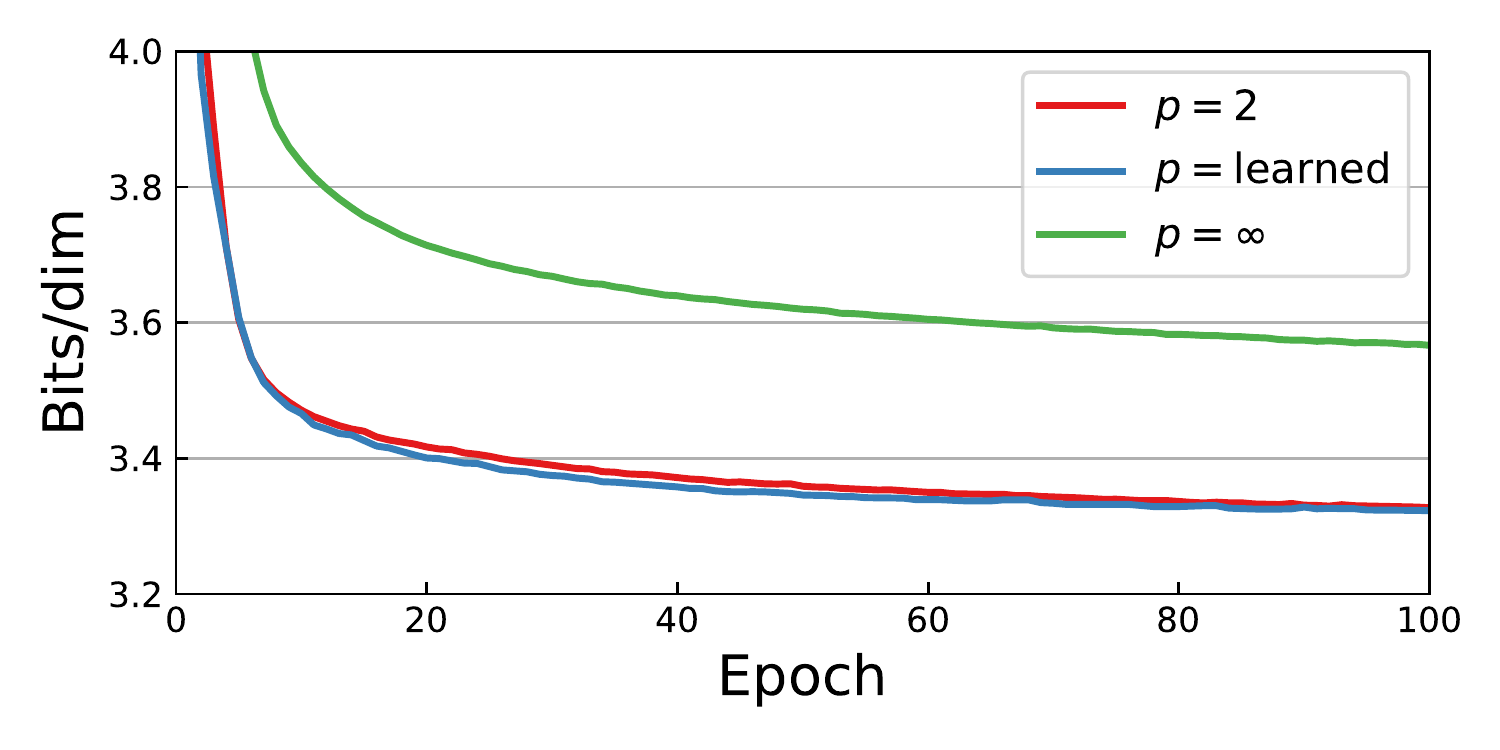}
        \caption{CIFAR-10}
        \label{fig:cifar10_norms}
    \end{subfigure}
    \caption{\textbf{Lipschitz constraints with different induced matrix norms}. (a) On 2D density, both learned and $\infty$-norm improve upon spectral norm, allowing more expressive models with fewer blocks. Standard deviation across 3 random seeds. (b) On CIFAR-10, learning the norm orders give a small performance gain and the $\infty$-norm performs much worse than spectral norm ($p=2$). Comparisons are made using identical initialization.} 
    \label{fig:norms}
\end{figure}



Figure \ref{fig:checkerboard_norms} shows that we obtain some performance gain by using either learned norms or the infinity norm on a difficult 2D dataset, where similar performance can be achieved by using fewer residual blocks. While the infinity norm works well with fully connected layers, we find that it does not perform as well as the spectral norm for convolutional networks. Instead, learned norms always perform slightly better and was able to obtain an improvement of 0.003 bits/dim on CIFAR-10. Ultimately, while the idea of generalizing spectral normalization via learnable norm orders is interesting in its own right to be communicated here, we found that the improvements are very marginal. 

\begin{figure}[H]
    \centering
    \begin{subfigure}[b]{0.2\linewidth}
    \includegraphics[width=\linewidth]{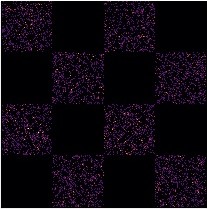}
    \caption*{Data}
    \end{subfigure}
    \begin{subfigure}[b]{0.2\linewidth}
    \includegraphics[width=\linewidth]{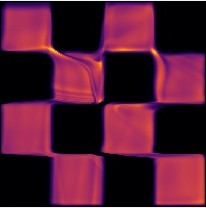}
    \caption*{$p=2$ (5.13 bits)}
    \end{subfigure}
    \begin{subfigure}[b]{0.2\linewidth}
    \includegraphics[width=\linewidth]{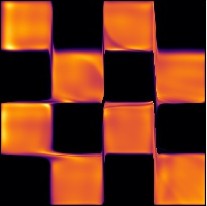}
    \caption*{$p=\infty$ (5.09 bits)}
    \end{subfigure}
    \caption{Learned densities on Checkerboard 2D.}
    \label{fig:checkerboard2d}
\end{figure}

\paragraph{Using different induced $p$-norms on Checkerboard 2D.} We experimented with the checkerboard 2D dataset, which is a rather difficult two-dimensional data to fit a flow-based model on due to the discontinuous nature of the true distribution. We used brute-force computation of the log-determinant for change of variables (which, in the 2D case, is faster than the unbiased estimator). In the 2D case, we found that $\infty$-norm always outperforms or at least matches the $p=2$ norm (ie. spectral norm). Figure~\ref{fig:checkerboard2d} shows the learned densities with 200 residual blocks. The color represents the magnitude of $p_\theta(x)$, with brighter values indicating larger values. The $\infty$-norm model produces density estimates that are more evenly spread out across the space, whereas the spectral norm model focused its density to model between-density regions.

\begin{figure}[H]
    \centering
    \includegraphics[width=0.5\linewidth, trim=0px 10px 0px 0px, clip]{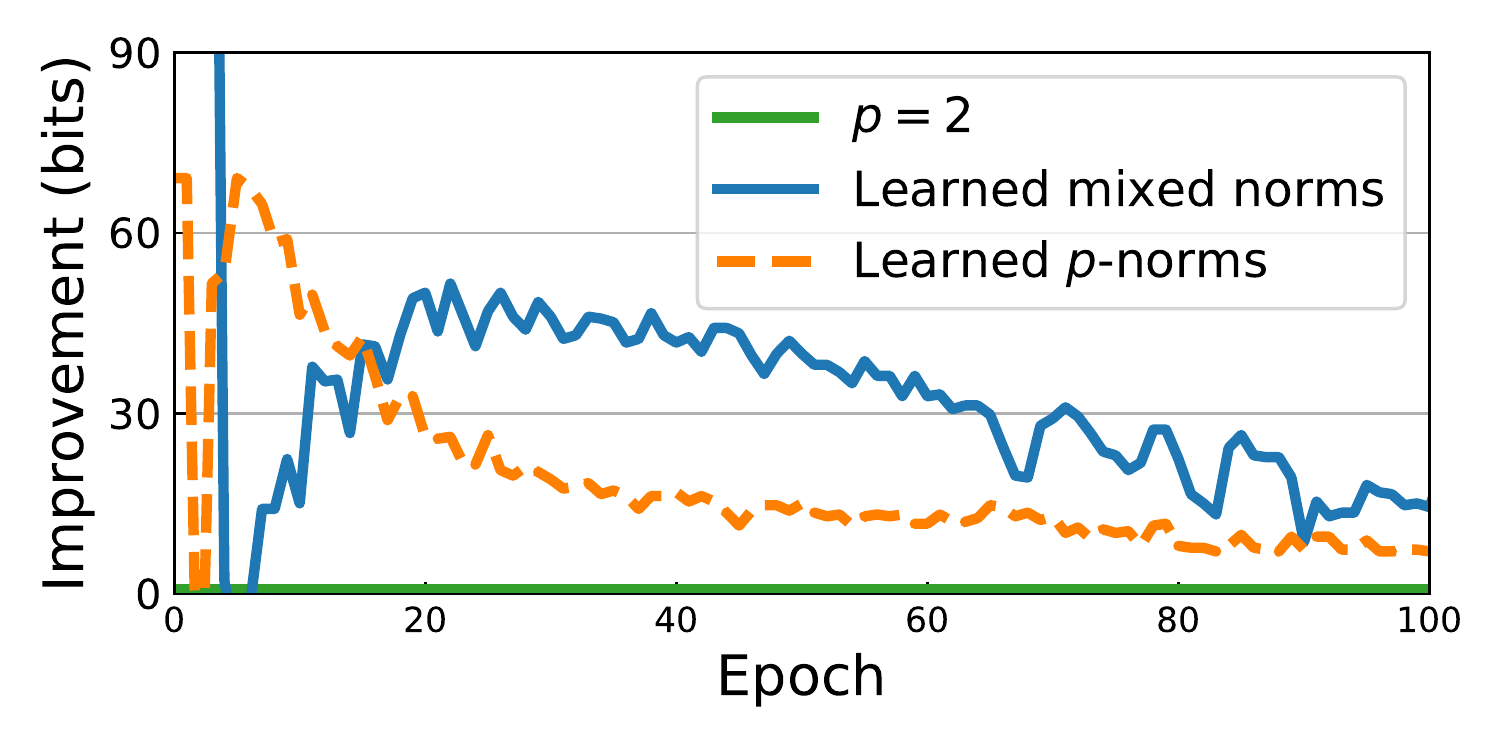}
    \caption{Improvement from using generalized spectral norm on CIFAR-10.}
    \label{fig:cifar10_learned_norms}
\end{figure}

\paragraph{Learning norm orders on CIFAR-10.}
We used $1 + \textnormal{tanh}(s) / 2$ where $s$ is a learned weight. This bounds the norm orders to $(1.5, 2.5)$. We tried two different setups. One where all norm orders are free to change (conditioned on them satisfying the constraints \eqref{eq:submultMixed}), and another setting where each states within each residual block share the same order. Figure~\ref{fig:cifar10_learned_norms} shows the improvement in bits from using learned norms. The gain in performance is marginal, and the final models only outperformed spectral norm by around $0.003$ bits/dim. Interestingly, we found that the learned norms stayed around $p=2$, shown in Figure~\ref{fig:learned_orders}, especially for the input and output spaces of $g$, ie. between blocks. This may suggest that spectral norm, or a norm with $p = 2$ is already optimal in this setting.

\begin{figure}[H]
    \centering
    \includegraphics[width=\linewidth, trim=0px 10px 0px 0px, clip]{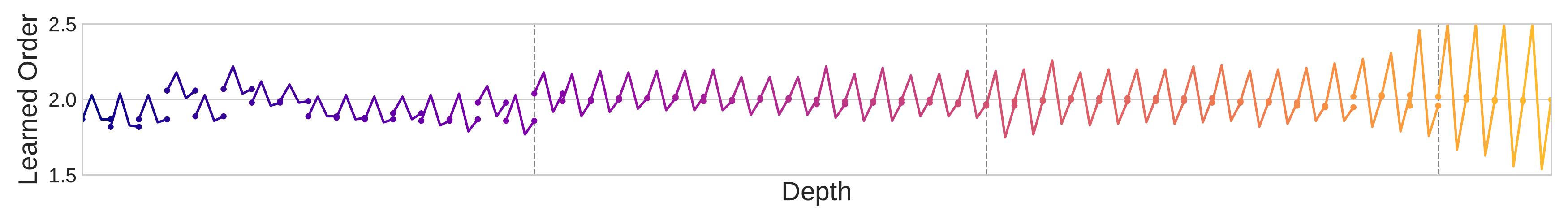}
    \caption{\textbf{Learned norm orders on CIFAR-10.} Each residual block is visualized as a single line. The input and two hidden states for each block use different normed spaces. We observe multiple trends: (i) the norms for the first hidden states are consistently higher than the input, and lower for the second. (ii) The orders for the hidden states drift farther away from 2 as depth increases. (iii) The ending order of one block and the starting order of the next are generally consistent and close to 2.}
    \label{fig:learned_orders}
\end{figure}

\section{Experiment Setup}\label{app:experiment_setup}

We use the standard setup of passing the data through a ``unsquashing'' layer (we used the logit transform~\citep{dinh2016density}), followed by alternating multiple blocks and squeeze layers~\citep{dinh2016density}. We use activation normalization~\citep{kingma2018glow} before and after every residual block. Each residual connection consists of
\begin{center}
    LipSwish $\rightarrow$ 3$\times$3 Conv $\rightarrow$ LipSwish $\rightarrow$ 1$\times$1 Conv $\rightarrow$ LipSwish $\rightarrow$ 3$\times$3 Conv
\end{center}
with hidden dimensions of $512$. Below are the architectures for each dataset.

\paragraph{MNIST.} With $\alpha=$1e-5.
\begin{center}
    Image $\rightarrow$ LogitTransform($\alpha$) $\rightarrow$ 16$\times$ResBlock $\rightarrow$  \big[ Squeeze $\rightarrow$ 16$\times$ResBlock \big]$\times$2
\end{center}

\paragraph{CIFAR-10.} With $\alpha=0.05$.
\begin{center}
    Image $\rightarrow$ LogitTransform($\alpha$) $\rightarrow$ 16$\times$ResBlock $\rightarrow$  \big[ Squeeze $\rightarrow$ 16$\times$ResBlock \big]$\times$2
\end{center}

\paragraph{SVHN.} With $\alpha=0.05$.
\begin{center}
    Image $\rightarrow$ LogitTransform($\alpha$) $\rightarrow$ 16$\times$ResBlock $\rightarrow$  \big[ Squeeze $\rightarrow$ 16$\times$ResBlock \big]$\times$2
\end{center}

\paragraph{ImageNet 32$\times$32.} With $\alpha=0.05$.
\begin{center}
    Image $\rightarrow$ LogitTransform($\alpha$) $\rightarrow$ 32$\times$ResBlock $\rightarrow$  \big[ Squeeze $\rightarrow$ 32$\times$ResBlock \big]$\times$2
\end{center}

\paragraph{ImageNet 64$\times$64.} With $\alpha=0.05$.
\begin{center}
    Image $\rightarrow$ Squeeze $\rightarrow$ LogitTransform($\alpha$) $\rightarrow$ 32$\times$ResBlock $\rightarrow$  \big[ Squeeze $\rightarrow$ 32$\times$ResBlock \big]$\times$2
\end{center}

\paragraph{CelebA 5bit 64$\times$64.} With $\alpha=0.05$.
\begin{center}
    Image $\rightarrow$ Squeeze $\rightarrow$ LogitTransform($\alpha$) $\rightarrow$ 16$\times$ResBlock $\rightarrow$  \big[ Squeeze $\rightarrow$ 16$\times$ResBlock \big]$\times$3
\end{center}

\paragraph{CelebA 5bit 256$\times$256.} With $\alpha=0.05$.
\begin{center}
	Image $\rightarrow$ Squeeze $\rightarrow$ LogitTransform($\alpha$) $\rightarrow$ 16$\times$ResBlock $\rightarrow$  \big[ Squeeze $\rightarrow$ 16$\times$ResBlock \big]$\times$5
\end{center}

For density modeling on MNIST and CIFAR-10, we added 4 fully connected residual blocks at the end of the network, with intermediate hidden dimensions of 128. These residual blocks were not used in the hybrid modeling experiments or on other datasets. For datasets with image size higher than 32$\times$32, we factored out half the variables after every squeeze operation other than the first one.

For hybrid modeling on CIFAR-10, we replaced the logit transform with normalization by the standard preprocessing of subtracting the mean and dividing by the standard deviation across the training data. The MNIST and SVHN architectures for hybrid modeling were the same as those for density modeling.

For augmenting our flow-based model with a classifier in the hybrid modeling experiments, we added an additional branch after every squeeze layer and at the end of the network. Each branch consisted of
\begin{center}
    3$\times$3 Conv $\rightarrow$ ActNorm $\rightarrow$ ReLU $\rightarrow$ AdaptiveAveragePooling(($1,1$))
\end{center}
where the adaptive average pooling averages across all spatial dimensions and resulted in a vector of dimension $256$. The outputs at every scale were concatenated together and fed into a linear softmax classifier.

\paragraph{Adaptive number of power iterations.} We used spectral normalization for convolutions~\citep{gouk2018regularisation}. To account for variable weight updates during training, we implemented an adaptive version of spectral normalization where we performed as many iterations as needed until the relative change in the estimated spectral norm was sufficiently small. As this acted as an amortization that reduces the number of iterations when weight updates are small, this did not result in higher time cost than a fixed number of power iterations, and at the same time, acts as a more reliable guarantee that the Lipschitz is bounded.

\paragraph{Optimization.} For stochastic gradient descent, we used Adam~\citep{kingma2014adam} with a learning rate of $0.001$ and weight decay of $0.0005$ applied outside the adaptive learning rate computation~\citep{loshchilov2018decoupled,zhang2018three}. We used Polyak averaging~\citep{Polyak1992AccelerationOS} for evaluation with a decay of $0.999$. 

\paragraph{Preprocessing.} For density estimation experiments, we used random horizontal flipping for CIFAR-10, CelebA-HQ 64, and CelebA-HQ 256. For CelebA-HQ 64 and 256, we preprocessed the samples to be 5bit. For hybrid modeling and classification experiments, we used random cropping after reflection padding with 4 pixels for SVHN and CIFAR-10; CIFAR-10 also included random horizontal flipping.

\end{document}